%% file: main.tex
\documentclass{article}

\PassOptionsToPackage{numbers, compress}{natbib}

 \usepackage[preprint]{neurips_2026}

\usepackage[utf8]{inputenc} 
\usepackage[T1]{fontenc}    
\usepackage{hyperref}       
\usepackage{url}            
\usepackage{booktabs}       
\usepackage{amsfonts}       
\usepackage{nicefrac}       
\usepackage{microtype}      
\usepackage{xcolor}         
\usepackage[table]{xcolor}

\usepackage{amsmath}
\usepackage{amssymb}
\usepackage{mathtools}
\usepackage{amsthm}
\usepackage{multirow}
\usepackage{wrapfig}

\theoremstyle{plain}
\newtheorem{theorem}{Theorem}[section]
\newtheorem{proposition}[theorem]{Proposition}
\newtheorem{lemma}[theorem]{Lemma}
\newtheorem{corollary}[theorem]{Corollary}
\theoremstyle{definition}
\newtheorem{definition}[theorem]{Definition}

\theoremstyle{remark}
\newtheorem{remark}[theorem]{Remark}

\newcommand{\fr}[1]{\mathfrak{#1}}

\newcommand{\mcal}[1]{\mathcal{#1}}

\newcommand{\mbf}[1]{\mathbf{#1}}
\newcommand{\hbf}[1]{\hat{\mathbf{#1}}}
\newcommand{\bb}[1]{\mathbb{#1}}

\newcommand{\G}[2]{\text{#1}(#2)}

\title{Why Depth Matters in Parallelizable Sequence Models: A Lie Algebraic View}

\author{%
  Gyuryang Heo$^{1,2}$
  \hspace{1em} Timothy Ngotiaoco$^{2}$
  \hspace{1em} Kazuki Irie$^{3}$\\
  \hspace{1em} \textbf{Samuel J.~Gershman}$^{2,3}$
  \hspace{1em} \textbf{Bernardo L.~Sabatini}$^{1,2}$\\
  $^{1}$ Howard Hughes Medical Institute,\\
  Department of Neurobiology, Harvard Medical School, Boston, MA, USA\\
  $^{2}$ Kempner Institute for the Study of Natural and Artificial Intelligence,\\
  Harvard University, Cambridge, MA, USA\\
  $^{3}$ Department of Psychology and Center for Brain Science,\\
  Harvard University, Cambridge, MA, USA\\
  \texttt{gheo@g.harvard.edu, timothy\_ngotiaoco@harvard.edu,}\\
  \texttt{\{kirie, gershman\}@fas.harvard.edu, bernardo\_sabatini@hms.harvard.edu}
}

\begin{document}

\maketitle

\begin{abstract}
Scalable sequence models, such as Transformer variants and structured state-space models,
often trade expressivity power for sequence-level parallelism, which enables efficient training.
Here we examine the bounds on error and how error scales when models operate outside of their expressivity regimes using a Lie-algebraic control perspective.
Our theory formulates a correspondence between the depth of a sequence model and the tower of Lie algebra extensions.
Echoing recent theoretical studies, we characterize the Lie-algebraic class of constant-depth sequence models
and their corresponding expressivity bounds.
Furthermore, we analytically derive an approximation error bound
and show that error diminishes exponentially as the depth increases,
consistent with the strong empirical performance of these models.
We validate our theoretical predictions using experiments on symbolic word and continuous-valued state-tracking problems.\looseness=-1
\end{abstract}

\section{Introduction}
Most sequence problems, from symbolic processing tasks like natural language, mathematics, and coding, to the understanding of physical-world dynamics, are fundamentally \emph{order-sensitive}.
Yet, highly scalable sequence models often achieve parallelism via \emph{order symmetry}.
Notably,
self-attention layers in transformers \citep{trafo}, 
and crucial components of diagonal structured state-space models \citep{orvieto2023resurrecting, gu2023mamba} or linear transformer variants \citep{katharopoulos2020transformers,schmidhuber1992learning, YangWSPK24} 
exhibit order symmetry in their layer inputs.
This strong structural bias has inspired the study of the expressivity limits of such models \citep{hahn2020theoretical,MerrillWGSSY20}.
In particular, recent theoretical work has identified reasoning and state-tracking problems \citep{KimS23,merrill2024illusion,grazzi2024unlocking}
that are provably unsolvable by constant-depth transformers or diagonal structured state-space models.

Despite these theoretical bounds, deep scalable sequence models remain the most empirically successful architectures,
especially in large-scale language modeling.
This discrepancy motivates a quantitative question:
\emph{How badly does a model operate when applied to tasks it provably cannot solve exactly?}
Quantifying this gap matters
because more efficient models
would be preferable
if the error remains tolerable.
Although prior work \citep{muca2024theoretical, walker2025structured} provides related intuition,
the error-expressivity scaling laws have not, to our knowledge, been clearly derived.

To answer this question, we leverage Lie theory 
\citep{iserles2000lie,agrachev2013control,robbin2022introduction}.
Geometrically, Lie theory measures the order-sensitivity of operations:
the state discrepancy caused by swapping the order of events (see Fig.~\ref{fig:Informal_Lie} for an intuitive illustration).
This provides both a \emph{hierarchy} for classifying tasks and sequence models
under their \emph{degree} of order sensitivity,
and a method for quantifying approximation error as the \emph{mismatch} between target and model states under the same input.

We show that order-symmetric models incur an unavoidable approximation error
when the task is order sensitive.
We then identify depth as a mechanism that
extends the expressivity of parallelizable sequence models, decreasing this error.
Overall, our work characterizes depth-dependent error-expressivity behavior, and provides a structural lens for model choice based on the task structure.
\section{Mathematical Preliminaries}
\label{sec:math_prelim}
Here we introduce the mathematical framework used throughout the paper.
We formalize order sensitivity and
approximation error
to establish our theories in Sec.~\ref{sec:theory}.
Readers primarily interested in the main results and their practical implications may wish to proceed directly to Sec.~\ref{sec:theory} and Sec.~\ref{sec:exp}.

\subsection{Lie groups and Lie algebras}
\label{sec:LieAlgebra}
A \emph{Lie group} $G$ is a smooth manifold equipped with a group structure under smooth multiplication.
Its tangent space at the identity characterizes a \emph{Lie algebra} $\fr{g}$.
$\fr{g}$ is a vector space equipped with a bilinear and antisymmetric Lie bracket operation $[\cdot, \cdot]: \fr{g} \times \fr{g} \to \fr{g}$ satisfying the Jacobi identity:
\begin{equation*}
    [[X,Y],Z] = [X,[Y,Z]] - [Y,[X,Z]],
    \quad X,Y,Z \in \fr{g}.
\end{equation*}
A canonical example of the Lie bracket is a matrix commutator.
For matrices
$\mbf{A}, \mbf{B} \in \bb{R}^{n \times n}$,
the Lie bracket corresponds to the matrix commutator $[\mbf{A},\mbf{B}] = \mbf{A}\mbf{B} - \mbf{B}\mbf{A}$.

\begin{wrapfigure}{r}{0.4\textwidth}
  \begin{center}
\includegraphics[width=0.25\columnwidth]{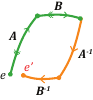}
  \end{center}
  \caption{Geometric intuition of Lie theory. Initialized at $e$,
  consider \emph{actions} $\mbf{A}$ and $\mbf{B}$ sequentially,
  and \emph{undo} actions $\mbf{B}^{-1}$ then $\mbf{A}^{-1}$.
  Composition of actions $\mbf{A}\mbf{B}\mbf{B}^{-1}\mbf{A}^{-1}$ would return to $e$.
  However,  \emph{switching} the order of undoing,
 by using $\mbf{A}\mbf{B}\mbf{A}^{-1}\mbf{B}^{-1}$, might not return to $e$,
incurring a discrepancy by landing on point $e'$.
  Lie theory provides a measure of the potential offset between $e$ and $e'$.}
  \label{fig:Informal_Lie}
\end{wrapfigure}

Given a vector space spanned by a collection of vector fields $\mcal{F}:= \{F_i\}$, the smallest Lie algebra is denoted as $\text{Lie}(\mcal{F})$ \cite{iserles2000lie}.
Unless otherwise mentioned, Lie algebras are finite-dimensional over $\bb{R}$.

\paragraph{Classes of Lie algebras}
Here we classify Lie algebras based on their \emph{degree} of order sensitivity.
The \emph{derived} series of $\fr{g}$ is the inductive sequence of \emph{commutator ideals}:
\begin{equation*}
    \fr{g}^{(0)} = \fr{g},
    \quad
    \fr{g}^{(i)} = [\fr{g}^{(i-1)},\fr{g}^{(i-1)}].
\end{equation*}
If $\fr{g}^{(j)} = 0$ with the smallest finite number $j$, $\fr{g}$ is classified as a \emph{solvable} Lie algebra with derived length $j$.
Similarly, the \emph{lower central} series of $\fr{g}$ is a sequence of ideals spanned by nested commutators:
\begin{equation*}
    \fr{g}^0 = \fr{g},
    \quad
    \fr{g}^i = [\fr{g},\fr{g}^{i-1}].
\end{equation*}
If $\fr{g}^c = 0$ with the smallest finite number $c$, $\fr{g}$ is \emph{nilpotent} with class $c$.
An \emph{abelian} Lie algebra is the special case with $[\fr{g},\fr{g}]=0$ \cite{hall2013lie}.

These Lie algebras are related by a structural hierarchy.
Abelian Lie algebras are nilpotent, and all nilpotent Lie algebras are solvable.
Furthermore, one can \emph{glue} abelian Lie algebras to construct a solvable one.
A Lie algebra \emph{extension} is a short exact sequence $0 \to \fr{a} \to \fr{g} \to \fr{h} \to 0$ where $\fr{a}$ is an \emph{ideal} of $\fr{g}$ and $\fr{h} \cong \fr{g}/\fr{a}$.
If $\fr{g}$ has derived length $j$, $\fr{g}^{(j-1)}$ is an abelian ideal by definition, and $\fr{g}/\fr{g}^{(j-1)}$ is again solvable with derived length $j-1$.
Thus, a solvable Lie algebra is a \emph{tower} of abelian algebra extensions (App.\ref{app:subsubsec_extensionprelim}).
\subsection{State-space models and the Lie equation}
\label{sec:LPVsystem}
To connect Lie theory with sequence models, we formalize state-space models
as controlled dynamical systems on Euclidean space (App.\ref{app:control_detail}).
We will observe how the Lie algebra characterizes a system.
\begin{definition}[State space model]
\label{def:staticLPVsystem}
A \emph{state space model} (SSM) is an affine vector field
on Euclidean \emph{state space} $\mcal{M} = \bb{R}^n$
and \emph{input space} $\mcal{X}$
such that:
\begin{equation}
\label{eq:vectorstateODE}
F_x: \mcal{M} \to T\mcal{M},
\quad F_x(h) := \mbf{A}(x) h + \mbf{b}(x).
\end{equation}
$\mbf{A}: \mcal{X} \to \bb{R}^{n \times n}$ and $\mbf{b}: \mcal{X} \to \bb{R}^n$ are real analytic \cite{sontag2013mathematical}.
We call $\mbf{A}$ the \emph{generator} of the SSM.
We denote the SSM as $\mbf{S}$, and $\dim \mbf{S} := \dim \mcal{M}$.
\end{definition}
We assume standard control-theoretic regularity and minimality (App.~\ref{app:control_detail}).
With a locally bounded and integrable input path $x(\cdot): [0,T] \to \mcal{X}$,
one can reintroduce time in Eq.~\ref{eq:vectorstateODE} and construct an ODE.
$\mbf{S}$ is \emph{initialized} when the initial state $h(0)$ is fixed.
In what follows, we focus on the state evolution.
\begin{definition}[State-transition matrix]
\label{def:Phi}
    The state-transition matrix $\Phi$
    is an evolution operator of the homogeneous flow:
    \begin{equation}
        \label{eq:STM}
        \Phi_x(t,0) = \mcal{T} \exp \int_0^t \mathbf{A}(x(\tau)) \; d\tau,
    \end{equation}
    where $\mcal{T} \exp$ denotes the chronological (time-ordered) exponential \cite{agrachev2013control, sontag2013mathematical}.
\end{definition}
\paragraph{From state-centric to flow-centric view}
\label{sec:flow_centric_realization}
Rather than tracking a single state, one can consider tracking entire \emph{flows}.
This perspective bridges Lie algebras and SSMs.

Let us note some key features of $\Phi(t,0)$:
\begin{equation}
\label{eq:PhiFeature}
    \dot{\Phi}(t,0) = \mbf{A}(x(t))\Phi(t,0),
    \quad
    \Phi(0,0) = I.
\end{equation}
It turns out that this is a \emph{Lie equation} \cite{iserles2008magnus}:
\begin{definition}[Controlled Lie equation]
\label{def:LieGroupEquation}
A linear controlled Lie equation is a matrix ODE:
\begin{equation}
\label{eq:LieGroupEquation}
    \dot{\Psi}(t) := \mbf{A}(x(t)) \Psi(t),
    \quad
    \Psi(0) = I,
\end{equation}
where $\text{Lie}(\{\mbf{A}(x)\}) = \fr{g} \subseteq \fr{gl}(n,\bb{R})$, $\Psi \in G \subseteq \G{GL}{n,\bb{R}}$, and the Lie group $G$ locally integrates $\fr{g}$.
\end{definition}
It is immediately clear that Eq.~\ref{eq:PhiFeature} aligns with Def.~\ref{def:LieGroupEquation}:
$\Phi$ can be considered as a local curve on $G$.
Therefore, the controlled linear Lie equation is a \emph{flow-centric} ODE,
tracking all sets of local evolutions rather than a single state \cite{jurdjevic1972control}.

As $\fr{g}$ locally characterizes state evolution,
we classify SSMs with the algebraic class of $\fr{g}$
and, for short, denote it $\mbf{S}_\fr{g}$.
We map SSMs to its Lie equations via \emph{lifting}.
\begin{definition}[Lift]
    Consider an $\mbf{S}_\fr{g}$ with initial state $h_0$.
    Its lifted Lie equation extends Eq.~\ref{eq:LieGroupEquation}
    with augmented flow space $G \times \mcal{M}$.
    For $\Psi \in G, \: h_0 \in \mcal{M}$:
    \begin{equation*}
        \dot{\Psi}(t) = \mbf{A}(x(t))\Psi(t),
        \
        \dot{h}(t) = 0,
        \
        \bigl( \Psi(0), h(0) \bigr) = \bigl( I, h_0 \bigr).
    \end{equation*}
\end{definition}
\paragraph{Restricted SSM}
\label{sec:main_affine_vector_field}
As an umbrella term, we say $\mbf{S}_\fr{g}$ is \emph{restricted}
if $\text{Lie}(\{\mbf{A}(x)\})$ is abelian.
Although
restricted SSMs can be order-sensitive due to the translation term, $\mbf{b}(x)$,
we observe that SSM expressivity is mostly determined by the generator, $\mbf{A}(x)$ (Thm.~\ref{thm:sim_error}).\footnote{
One can absorb $\mbf{b}$ via \emph{homogenization} and treat an affine SSM
as a homogeneous SSM (App.~\ref{app:bridge_to_general_LPV}; \cite{krener1977decomposition}).}
If not restricted, $\mbf{S}_\fr{g}$ is \emph{general}.
\subsection{Simulation and approximation error}
\label{par:sim_min_main}
Studying expressivity of a model class is tied to \emph{simulation}:
whether one system can reproduce another's behavior \cite{liu2022transformers}.
Analogously, we use a \emph{state-based} notion of simulation.

We say $\mbf{S}_0$ simulates $\mbf{S}_1$
if a smooth surjective submersion $\varphi: \mcal{M}_0 \to \mcal{M}_1$ maps
$\mbf{S}_0$ state evolution to $\mbf{S}_1$ states under any inputs.
When exact simulation is impossible, we quantify the state mismatch after evolution.\footnote{
Details are provided in App.~\ref{app:sim_detail}.}
Note that the lifted Lie equation simulates the original initialized SSM,
\begin{equation}
\label{eq:flow_to_state_submersion}
    \varphi: G \times \mcal{M} \to \mcal{M},
    \quad
    \varphi\bigl( \Phi, h \bigr) = \Phi h,
    \quad
    h(t) = \varphi\bigl( \Phi(t,0), h(0) \bigr) = \Phi(t,0)h(0).
\end{equation}
\paragraph{Magnus expansion}
\label{par:MagnusExpansion}
Given the Lie-algebraic perspective of SSMs,
we now introduce a quantitative tool for the approximation error.
The \emph{Magnus expansion} decomposes $\Phi$ via exponential of
iterated Lie brackets (App.~\ref{app:CDE}),
providing an explicit measure of order-relevant errors.
\begin{definition}[Commutator mass]
\label{def:commutator_mass}
    Given a fixed input path $x$ over the interval $[0,T]$,
    we define \emph{commutator mass} of $\mbf{S}$
    as the norm of the second-order Magnus term, $\|\Omega_2\|$:
    \begin{equation}
    \label{eq:omega_2}
        \Omega_2
        = \frac{1}{4}\iint_{[0,T]^2}
        \text{sgn}(t_1 - t_2)
        [\mathbf{A}(x(t_1)), \mathbf{A}(x(t_2))]
        \, dt_1 dt_2.
    \end{equation}
\end{definition}
We work on small windows in which the Magnus series converges.
Long horizon behavior for large $T$ is constructed by the composition of small time partitions \cite{sontag2013mathematical}:
\begin{equation}
\label{eq:flow_composition}
    \Phi(T,0) = \prod_{i=0}^n \Phi(t_{i+1}, t_i),
    \quad
    t_0=0,
    \
    t_{n+1}=T.
\end{equation}
\subsection{Deep structure}
Most practical applications of artificial neural networks leverage deep layered structures with additional pointwise computation (e.g. MLP) between layers. To extend our theory, here we discuss dynamical systems with deep layered structure  \cite{krener1977decomposition}.
\begin{definition}[Deep SSM]
\label{def:stacking}
    The \emph{stack} of $\mbf{S}_0$ and $\mbf{S}_1$ is
    defined with a smooth map $v: \mcal{M}_0 \times \mcal{X} \to \mcal{X}_1$:
    \begin{align*}
        \mcal{M} &\cong \mcal{M}_0 \times \mcal{M}_1, \: x_1(t) = v(h_0(t), x(t)), \\
        \dot{h}_0(t) &= \mbf{A}_0(x(t))h_0(t) + \mbf{b}_0(x(t)), \quad
        \dot{h}_1(t) = \mbf{A}_1(x_1(t))h_1(t) + \mbf{b}_1(x_1(t)).
    \end{align*}
     A deep SSM is obtained inductively:
     \begin{align*}
        x_i(t) &= v_i(h_{i-1}(t), \cdots, h_0(t), x(t)),
        \quad
        x_0(t) = x(t),
        \quad
        \dot{h}_i(t) := \mbf{A}_i(x_i(t)) h_i(t) + \mbf{b}_i(x_i(t)),
     \end{align*}
     where $v_i: \Bigl( \prod_{j=0}^{i-1} \mcal{M}_j \Bigr) \times \mcal{X} \to \mcal{X}_i$ is smooth.

     We refer to each local SSM as the $i$-th layer.
     An $L$-layer SSM is said to be restricted when all of its layers are restricted.
     This convention applies to the other classes as well.
\end{definition}
\subsection{Word problems}
\label{sec:intro_word_problems}
Lastly, we introduce \emph{word problems}, one of the standard test beds for sequence model expressivity.
\begin{definition}[Word problem \citep{merrill2024illusion, liu2022transformers}]
\label{def:wp_main_text}
    Given a finite alphabet set $A$, a \emph{word} is an element of the free monoid $A^\ast$ on $A$.
    For a finite monoid $M$ generated by $A$, the word evaluation map $f$ is the unique monoid homomorphism
    $f: A^\ast \to M$ determined by sending each generator in $A$ to itself in $M$.
    Solving the word problem for $M$ is to learn the word evaluation map and reduce any word
    $a_1a_2\ldots a_t$ to $a_1 \cdot a_2 \cdot \ldots \cdot a_t \in M$ under multiplication $\cdot$.
\end{definition}
Consider each \emph{alphabet symbol} as a \emph{primitive action}, such as a permutation, rotation, or state transition operation (Fig.~\ref{fig:Informal_Lie}).
A \emph{word} is then a sequence of \emph{action commands} evaluated by composing primitive actions in order.
A sequence of matrix products and a rollout of action-conditioned state transitions in a Markov decision process (MDP) are examples of word evaluation.
The \emph{parity} problem is the word problem for the cyclic group $C_2$, with $\cdot$ corresponding to addition modulo $2$.

Thus, word problems are symbolic state-tracking tasks with an explicit algebraic class.
As symbols arrive, the model must update its state according to the composition rule
and output the resulting element.
We list classes of exemplary word problems and related concrete problems in Table~\ref{tab:words}.
\begin{table}[t]
\setlength{\tabcolsep}{0.3em}
  \caption{Class of representative word problems on finite group.}
  \vspace{-3mm}
  \label{tab:words}
  \begin{center}
    \begin{small}
        \begin{tabular}{lcccr}
          \toprule
          Class  & Finite group example         & Related problems\\
          \midrule
          Abelian    & $C_2, C_3, C_{60}$ & Parity; Cyclic gridworld \\ 
         Nilpotent & $D_8, H_3$ & Quantum spin \\
          Solvable    & $S_3, A_4, S_4$ & 2D affine transformation \\
          Non-solvable    & $A_5, S_5$ & Rubik's cube; 3D rotation\\
          \bottomrule
        \end{tabular}
    \end{small}
  \end{center}
  \vskip -0.1in
\end{table}
\section{Theory}
\label{sec:theory}
Our theory has two major parts.
Thm.~\ref{thm:sim_error} derives an approximation error bound for $1$-layer SSMs.
Thm.~\ref{thm:depth_extension} and Cor.~\ref{col:Kstacks} provide theoretical ground on how depth extends expressivity via Lie algebra extension.
Cor.~\ref{col:nilpotentization} and Prop.~\ref{prop:logdepth}
construct an upper bound for depth-dependent approximation errors.
\subsection{Expressivity bounds of a single layer}
\label{sec:theory_single_layer}
Here our theory aligns with prior work \cite{merrill2024illusion, liu2022transformers, anonymous2025the}.
We begin with the most basic case.
\begin{lemma}
\label{lem:sim_impossible}
No abelian SSM $\hbf{S}$ can simulate a general SSM $\mbf{S}$.
\end{lemma}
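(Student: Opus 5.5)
We argue by contradiction, working at the level of flows through the lifted Lie equation (Def.~\ref{def:LieGroupEquation} and Eq.~\ref{eq:flow_to_state_submersion}). Suppose an abelian SSM $\hbf{S}$, with generator $\hat{\mbf{A}}$ and abelian $\hat{\fr{g}} = \text{Lie}(\{\hat{\mbf{A}}(x)\})$, simulates a general SSM $\mbf{S}_\fr{g}$ through a smooth surjective submersion $\varphi:\hat{\mcal{M}}\to\mcal{M}$. Since $\hat{\fr{g}}$ is abelian, the chronological exponential collapses to an ordinary exponential:
\[
\hat\Phi_x(T,0)=\exp\int_0^T \hat{\mbf{A}}(x(\tau))\,d\tau .
\]
In particular, the state reached by $\hat{\mbf{S}}$ depends on the input path only through its occupation measure, so it is invariant under reordering of the input path.

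The key step is to show that $\mbf{S}$ is not order-invariant in this sense. Because $\mbf{S}$ is general, there are inputs $x_1,x_2$ with $[\mbf{A}(x_1),\mbf{A}(x_2)]\neq 0$. Take the two piecewise-constant inputs $x_1$ then $x_2$, and $x_2$ then $x_1$, each held for time $\epsilon$. Their occupation measures agree, so $\hbf{S}$ ends in the same state after either input. For $\mbf{S}$, the Magnus expansion (Def.~\ref{def:commutator_mass}) gives
\[
\Phi_{12}\Phi_{21}^{-1} = I + \epsilon^2[\mbf{A}(x_2),\mbf{A}(x_1)] + O(\epsilon^3),
\]
since the commutator mass $\Omega_2$ changes sign under swapping the two pieces. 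So for small $\epsilon$ the two flows differ. By minimality and regularity (App.~\ref{app:control_detail}), in particular the observability/accessibility part, there is an initial state $h_0$ with $\Phi_{12}h_0\neq\Phi_{21}h_0$. It is enough to pick $h_0\notin\ker[\mbf{A}(x_1),\mbf{A}(x_2)]$, after accounting for the contribution of $\mbf{b}$ through homogenization (App.~\ref{app:bridge_to_general_LPV}). This gives the contradiction: $\varphi$ sends the single common state of $\hbf{S}$ to two different states of $\mbf{S}$.

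The main obstacle is the initialization and the affine term. The simulation map is fixed, but $\hbf{S}$ also carries its own translation $\hat{\mbf{b}}$, and a restricted affine SSM can itself be order-sensitive, as the paper notes. So the reordering-invariance claim is false for general affine $\hbf{S}$. I would handle this by first homogenizing both systems (App.~\ref{app:bridge_to_general_LPV}) and reading ``abelian'' as referring to the Lie algebra of the homogenized vector fields $\{F_x\}$. This is the reading under which the lemma is exact. Restricted SSMs with nonzero $\hat{\mbf{b}}$ would then be treated quantitatively in Thm.~\ref{thm:sim_error} rather than here. With that convention, $\text{Lie}(\{\hat F_x\})$ abelian makes the homogenized flows commute, and the argument above goes through unchanged. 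The remaining work is checking that an abelian $\hat{\mbf{S}}$ cannot absorb the order dependence through the nonlinearity of $\varphi$. This holds because $\varphi$ is a function of the final state only, and that final state is identical for the two reorderings.
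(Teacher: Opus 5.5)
Your argument is correct and is essentially the paper's. Abelian flows are invariant under reordering the input path, while the chronological exponential of a general SSM is not, so no simulation map can exist. You sharpen it in three ways: an explicit witness ($e^{\epsilon \mbf{A}(x_2)}e^{\epsilon \mbf{A}(x_1)}$ versus $e^{\epsilon \mbf{A}(x_1)}e^{\epsilon \mbf{A}(x_2)}$, differing at order $\epsilon^2$), an arbitrary state map $\varphi$ in place of the paper's linear relation $\mbf{T}\hat{\Phi}=\Phi\mbf{T}$, and an explicit homogenized reading of ``abelian.'' One small repair: since the systems are initialized you cannot choose $h_0$ freely, but a common input prefix (using the accessibility part of minimality) steers $\mbf{S}$ off the proper affine subspace annihilated by the homogenized commutator, and because the prefix is shared, $\hbf{S}$ still ends in the same state under both orderings.
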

Proof is provided in App.~\ref{app:sim_impossible_proof}.
Building on Lemma~\ref{lem:sim_impossible}, we constructively provide an error bound.
\begin{theorem}
\label{thm:sim_error}
    Consider a restricted $\hbf{S}$ that approximates $\mbf{S}$.
    There exists an input path $x$
    on a sufficiently small $[0,T]$
    that incurs approximation error scaling with the commutator mass $\|\Omega_2\|$ of $\mbf{S}$.
\end{theorem}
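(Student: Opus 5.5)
The plan is to compare the Magnus expansions of the two state-transition matrices on a short window and show that the abelian model necessarily misses the second-order term $\Omega_2$.

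\textbf{Magnus structure of the two flows.} For the restricted $\hbf{S}$, $\text{Lie}(\{\hat{\mbf{A}}(x)\})$ is abelian. Hence its chronological exponential collapses to an ordinary exponential,
\[
\hat\Phi(T,0)=\exp \int_0^T \hat{\mbf{A}}(x(\tau))\,d\tau .
\]
For the target $\mbf{S}$ we instead have $\Phi(T,0)=\exp(\Omega_1+\Omega_2+R_3)$, where $\Omega_1=\int_0^T \mbf{A}(x)\,d\tau$ and $\Omega_2$ is as in Def.~\ref{def:commutator_mass}. For small $T$ with bounded input, $\|\Omega_k\|=O(T^k)$, so the series converges and $\|R_3\|=O(T^3)$.

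\textbf{Choice of input path.} I would construct a family of paths that separates the commutator term. Since $\mbf{S}$ is general, Lemma~\ref{lem:sim_impossible} (or directly non-abelianness) gives inputs $a,b$ with $[\mbf{A}(a),\mbf{A}(b)]\neq 0$. Consider the two piecewise-constant paths $x=(a \text{ then } b)$ and $x'=(b \text{ then } a)$, each piece of length $T/2$. The two paths have the same $\Omega_1$ and opposite $\Omega_2$; for $x$ one has $\Omega_2=\tfrac{T^2}{8}[\mbf{A}(a),\mbf{A}(b)]$, up to sign convention. The abelian model's flow depends on the path only through the integral of $\hat{\mbf{A}}$, so it is order symmetric: $\hat\Phi_x=\hat\Phi_{x'}$. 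The same holds for the lifted state map $\varphi$ of Eq.~\ref{eq:flow_to_state_submersion}.

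The affine term $\hat{\mbf{b}}$ breaks this symmetry, but only at second order through terms of the form $\hat{\mbf{A}}\hat{\mbf{b}}$. I would handle it by homogenization (App.~\ref{app:bridge_to_general_LPV}), which turns the affine model into a homogeneous one. After homogenization its generator Lie algebra is solvable, and its second-order Magnus term lies only in the translation block. This should make the argument below work on the linear block.

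\textbf{Triangle argument.} Let $\hat h,\hat h'$ be the approximant's states under $x,x'$, and $h,h'$ the target's states. Then
\[
\|h-h'\|\le \|h-\varphi(\hat h)\|+\|\varphi(\hat h)-\varphi(\hat h')\|+\|\varphi(\hat h')-h'\|.
\]
The middle term vanishes by order symmetry, at least in the generator part. Expanding the target gives
\[
h-h'=(\Phi_x-\Phi_{x'})h_0=2\Omega_2 h_0+O(T^3).
\]
Choosing $h_0$ aligned with a top singular vector of the commutator, one of the two paths has error at least $\|\Omega_2\|\,\|h_0\|-O(T^3)$, and this is $\Theta(T^2)$ for $T$ small. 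Conversely, an upper bound of the same order follows by matching $\Omega_1$. This is why the error scales with $\|\Omega_2\|$.

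\textbf{Main obstacle.} The hardest step is rigorously controlling the affine term $\hat{\mbf{b}}$ together with the readout $\varphi$. The submersion $\varphi$ could be nonlinear, and the translation part can generate genuine order dependence, e.g.\ $[\hat{\mbf{A}}(a)h+\hat{\mbf{b}}(a),\,\cdot\,]$ brackets. I would first linearize $\varphi$ at $\hat h_0$, so that its Lipschitz constant multiplies the bound. I would then argue that homogenized brackets of restricted affine fields lie in the abelian translation ideal. These affect only an affine-invariant component, which cannot reproduce a commutator acting nontrivially on $h_0$ for all initial states; this is where minimality of $\mbf{S}$ is needed.
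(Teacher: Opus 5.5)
Your abelian step is essentially the paper's argument: permute the window, use that the abelian flow is order symmetric, and read off $\Phi_x-\Phi_{x'}=2\Omega_2+\mcal{O}(T^3)$ from the Magnus series. The restricted case, however, has a genuine gap. The step ``the middle term vanishes by order symmetry, at least in the generator part'' fails because the translation term is not lower order. For your two-piece paths,
\[
\hat h-\hat h'=\tfrac{T^2}{4}\bigl(\hat{\mbf{A}}(b)\hat{\mbf{b}}(a)-\hat{\mbf{A}}(a)\hat{\mbf{b}}(b)\bigr)+\mcal{O}(T^3),
\]
which is the same order as the target's $2\Omega_2h_0=\tfrac{T^2}{4}[\mbf{A}(b),\mbf{A}(a)]h_0$. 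Homogenization does not remove this, since the translation column feeds directly into the coordinates that $\mbf{P}$ reads out. In fact, because $h_0$ is fixed, a diagonal affine model can match a homogeneous target on both $x$ and $x'$ up to $\mcal{O}(T^3)$. Take a block started at $0$ with $\hat{\mbf{A}}(a)=0$, $\hat{\mbf{A}}(b)=I$, $\hat{\mbf{b}}(a)=\mbf{A}(b)\mbf{A}(a)h_0$, $\hat{\mbf{b}}(b)=0$, and subtract a block that merely integrates the same $\hat{\mbf{b}}$. This contributes $\tfrac{T^2}{4}\mbf{A}(b)\mbf{A}(a)h_0+\mcal{O}(T^3)$ on $x$ and exactly $0$ on $x'$. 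Analogous blocks supply $\mbf{A}(a)\mbf{A}(b)h_0$ on $x'$ and the symmetric quadratic terms. A block started at $h_0$ with $\hat{\mbf{A}}=0$, $\hat{\mbf{b}}(u)=\mbf{A}(u)h_0$ supplies the first-order part. So no single pair $(x,x')$ issued from the fixed initial state gives a lower bound of order $\|\Omega_2\|$. For the same reason you are not free to ``choose $h_0$'' along a top singular vector.

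Your last paragraph has the right intuition: translation-induced order dependence is state-independent, while the target's is linear in the state. But the proposal never supplies the mechanism that turns this into a bound for an initialized system. The paper uses two prefixes that steer $\mbf{S}$, by minimality, to distinct states $h_1(T)\neq h_2(T)$, each followed on $(T,2T]$ by your $a$-then-$b$ window or its swap. The inhomogeneous integrals over the window depend only on the window's input, so they cancel exactly in the double difference
\[
\bigl(h_1(2T)-h_1'(2T)\bigr)-\bigl(h_2(2T)-h_2'(2T)\bigr)=\mbf{M}\bigl(h_1(T)-h_2(T)\bigr),\qquad \mbf{M}=\Phi_{ab}(2T,T)-\Phi_{ba}(2T,T).
\]
For the restricted model the analogous $\hat{\mbf{M}}$ is zero, so its double difference vanishes whatever $\hat h_1(T),\hat h_2(T)$ are, and by linearity so does its image under $\mbf{P}$. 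A four-term triangle inequality then forces error at least $\tfrac14\|\mbf{M}(h_1(T)-h_2(T))\|$ on one of the four paths, with $\mbf{M}$ controlled by $\Omega_2$ exactly as in your abelian step. The prefixes, not a choice of $h_0$, give the freedom to make $h_1(T)-h_2(T)$ large where $\mbf{M}$ acts nontrivially. This also makes your concern about a nonlinear $\varphi$ moot: the paper's error is defined with a linear full-rank $\mbf{P}$, and that linearity is what lets the vanishing double difference pass through the readout.
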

Proof is provided in App.~\ref{app:sim_error_proof}.
\begin{proof}[Proof sketch]
The order symmetry generates an irreducible
order-sensitive error.
Via flow composition, the error can accumulate over long horizons and scale with sequence length
(Eq.~\ref{eq:flow_composition}).
In Fig.~\ref{fig:Informal_Lie}, this corresponds to repeatedly composing the action and accumulating the gap between $e$ and $e'$.
\end{proof}
\subsection{Expressivity bounds 
of deep structure}
\label{sec:theory_multi_layer}
Here we consider deep restricted SSMs
and discuss how depth mitigates their expressivity limits.
\paragraph{Exact representation}
We begin by clarifying an algebraic class of deep restricted SSMs.
\begin{proposition}
\label{lem:stackderivedlength}
    Restricted $k$-layer SSMs are solvable with at most $2k$ derived length.
    Abelian $k$-layer SSMs are
    solvable with at most $k$ derived length.
\end{proposition}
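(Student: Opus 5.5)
We first need to say which Lie algebra is attached to a deep SSM. The natural choice is the one generated by its homogenized, lifted vector fields. We homogenize each layer as in App.~\ref{app:bridge_to_general_LPV}: the affine field $F_x(h)=\mbf{A}(x)h+\mbf{b}(x)$ becomes the linear field on $\bb{R}^{n+1}$ with generator
\begin{equation*}
\tilde{\mbf{A}}(x)=\begin{pmatrix}\mbf{A}(x) & \mbf{b}(x)\\ 0 & 0\end{pmatrix}.
\end{equation*}
For a single restricted layer, $\text{Lie}(\{\tilde{\mbf{A}}(x)\})$ sits inside the block-upper-triangular algebra. The projection onto the upper-left block is a Lie homomorphism, and its image is the abelian algebra $\fr{a}=\text{Lie}(\{\mbf{A}(x)\})$. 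Its kernel consists of the translation part, the matrices $\bigl(\begin{smallmatrix}0&v\\0&0\end{smallmatrix}\bigr)$, which form an abelian ideal. So one restricted layer is an extension of an abelian algebra by an abelian ideal, and its derived length is at most $2$. If the layer is abelian, meaning the whole homogenized algebra is abelian, the derived length is $1$.

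For the induction, view the $k$-layer SSM as a control system on $\mcal{M}_0\times\cdots\times\mcal{M}_{k-1}$. Its vector fields have cascade (triangular) form: the $i$-th component depends only on $h_0,\dots,h_i$ and $x$. Let $\fr{g}_k$ be the Lie algebra of vector fields generated by $\{F_x\}$, and let $\pi:\mcal{M}^{(k)}\to\mcal{M}^{(k-1)}$ forget the top layer. Because of the cascade structure, every generator is $\pi$-related to the corresponding generator of the $(k-1)$-layer system. Pushforward along $\pi$ therefore gives a Lie algebra homomorphism $\fr{g}_k\to\fr{g}_{k-1}$. Its kernel $\fr{k}$ is made of fields that are vertical, i.e.\ tangent to the fibers $\mcal{M}_{k-1}$, and it is an ideal. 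The claim to prove is that $\fr{k}$ is solvable of derived length at most $2$, or at most $1$ in the abelian case. The derived series of an extension satisfies $\ell(\fr{g})\le\ell(\fr{k})+\ell(\fr{g}/\fr{k})$, so induction on $k$ then gives $2k$, respectively $k$.

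The main obstacle is bounding the derived length of the kernel $\fr{k}$. Vertical fields have the form $h_{k-1}\mapsto \mbf{A}_{k-1}(x_{k-1})h_{k-1}+\mbf{b}_{k-1}(x_{k-1})$, with coefficients that are now functions of the lower states through $v_{k-1}$. Brackets of such fields with the lower-layer fields differentiate these coefficients in the base directions. They do not change the fiberwise form, though: everything in $\fr{k}$ stays fiberwise affine, with linear parts taking values pointwise in the abelian algebra $\fr{a}_{k-1}=\text{Lie}(\{\mbf{A}_{k-1}(x)\})$. We need $\fr{a}_{k-1}$ to be closed under these derivatives and linear combinations. It is a finite-dimensional subspace, and derivatives of curves in it stay in it, so this holds. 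Pointwise, two vertical fields then bracket as
\begin{equation*}
\bigl[(P,p),(Q,q)\bigr]=\bigl([P,Q],\,Pq-Qp\bigr)=(0,\,Pq-Qp),
\end{equation*}
which is purely translational. Purely translational vertical fields commute with each other. Hence $[\fr{k},\fr{k}]$ is abelian and $\ell(\fr{k})\le 2$.

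In the abelian case the homogenized generators of layer $k-1$ commute, so the same computation gives $[\fr{k},\fr{k}]=0$ and $\ell(\fr{k})\le1$. One point needs care: fields whose coefficients depend on the base are not literally in the span of the original commuting generators. We would handle this by treating $\fr{k}$ as a module over functions on the base, so that fiberwise brackets are computed pointwise. If the infinite-dimensionality of this function algebra conflicts with the paper's standing finite-dimensionality assumption, we fall back on the minimality assumptions of App.~\ref{app:control_detail} and restrict to the finite-dimensional algebra actually generated.
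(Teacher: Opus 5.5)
Your proposal is correct and is essentially the paper's argument. The paper filters the algebra by the ideals $I_l$ of fields vanishing below layer $l$ and asserts that each quotient $I_l/I_{l+1}$ is abelian (resp.\ restricted, hence of derived length at most $2$); your induction through the vertical kernel of the projection $\fr{g}_k\to\fr{g}_{k-1}$, together with $\ell(\fr{g})\le\ell(\fr{k})+\ell(\fr{g}/\fr{k})$, is the same filtration read from the top layer down.

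Your pointwise check supplies the step the paper calls ``by construction.'' Brackets with lower layers only differentiate coefficients inside a fixed finite-dimensional subspace: the linear parts stay in $\fr{a}_{k-1}$, and in the abelian case the whole affine pair stays in the span of the homogenized generators. Your closing worry about finite-dimensionality is unnecessary, since derived length and this pointwise computation make sense for the possibly infinite-dimensional algebra of vector fields.
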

Proof is provided in App.~\ref{app:stackderivedlength_proof}.
Thus,
finite-depth restricted SSMs are limited to be solvable.

We now arrive at our core theorem, connecting the depth of SSM to Lie algebra extension. 
\begin{theorem}
\label{thm:depth_extension}
Let $0 \to \fr{a} \to \fr{g} \to \fr{h} \to 0$ be a Lie algebra extension.
Then, there exists a 2-layer deep SSM with smooth output map that simulates $\mbf{S}_\fr{g}$.
\end{theorem}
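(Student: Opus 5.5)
Following the Krener cascade-decomposition idea, I will realize the Lie equation of $\mbf{S}_\fr{g}$ (Def.~\ref{def:LieGroupEquation}) as a cascade: a first layer that integrates the quotient dynamics on $H$ (with Lie algebra $\fr{h}$), and a second layer, driven by the first layer's state, that integrates the residual dynamics in the ideal $\fr{a}$. Since the lifted Lie equation simulates the initialized SSM via $\varphi(\Phi,h)=\Phi h$ (Eq.~\ref{eq:flow_to_state_submersion}), it suffices to simulate $\Phi_x(t,0)$ on $G$ and then compose with $\varphi$ to get the smooth output map.

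\emph{Step 1 (splitting).} Choose a linear vector-space complement $\fr{g}=\fr{a}\oplus\fr{k}$ with $\fr{k}\cong\fr{h}$ via the quotient map $\pi$. Work locally near the identity, where the Magnus series converges and flows compose as in Eq.~\ref{eq:flow_composition}. Write group elements as $\Phi = \exp(a)\,\Theta$, where $\Theta$ lies in a local section $\sigma(H)\subset G$ (for instance $\sigma = \exp\circ\pi^{-1}$ on a neighborhood of $e$) and $\exp(a)\in A=\exp\fr{a}$. \emph{Step 2 (layer 0).} Layer $0$ solves $\dot\Theta_H = \pi(\mbf{A}(x))\Theta_H$ on $H$, realized linearly through a faithful representation of $\fr{h}$, or through the adjoint action on $\fr{g}/\fr{a}$ if a faithful one is not at hand. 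This is an SSM with generator $\pi\circ\mbf{A}$. \emph{Step 3 (layer 1).} Differentiate $\Phi=\exp(a)\sigma(\Theta_H)$ and use $\dot\Phi=\mbf{A}\Phi$. The residual term $\mbf{A}(x) - d\sigma(\dot\Theta_H)\sigma^{-1}$, conjugated by $\sigma(\Theta_H)$, lies in $\fr{a}$ because $\fr{a}$ is an ideal and $\pi$ kills it. This gives a Lie equation on $A$:
\[
\dot\Psi_A = \mbf{A}_1\bigl(x_1(t)\bigr)\Psi_A,\qquad x_1(t)=v\bigl(h_0(t),x(t)\bigr),
\]
where $v$ is smooth, built from $\sigma$, $d\sigma$ and $\mathrm{Ad}$, and $\mbf{A}_1$ takes values in $\fr{a}$. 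This is exactly the stacking in Def.~\ref{def:stacking}.

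\emph{Step 4 (output).} Define the output map $(\Theta_H,\Psi_A,h_0)\mapsto \Psi_A\,\sigma(\Theta_H)\,h_0$, which is smooth and a surjective submersion onto $\mcal{M}$ locally. Then extend to long horizons by composing windows as in Eq.~\ref{eq:flow_composition}, re-anchoring the section in each window.

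\emph{Main obstacle.} The hard part is showing that the layer-1 generator can be written as a genuinely linear SSM in its state, with the dependence on layer 0 entering only through the input map $v$. If $\fr{a}$ were abelian this would be immediate, since the $\exp(\fr{a})$ dynamics commute. For general $\fr{a}$ the layer-1 state must be a faithful linear representation of $A$, for instance the restriction of the given matrix representation in $\mathfrak{gl}(n)$, because $\fr{a}\subseteq\fr{g}\subseteq\mathfrak{gl}(n)$. The nonlinear corrections from $d\sigma$ and $\mathrm{Ad}_{\sigma}$ must then be absorbed into $v$. I would first try the order $\Phi = \sigma(\Theta_H)\exp(a)$ instead: there the layer-1 generator is $\mathrm{Ad}_{\sigma^{-1}}\bigl(\mbf{A}-d\sigma\,\sigma^{-1}\text{-correction}\bigr)$, which depends only on $(x,\Theta_H)$ and is smooth. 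If a global section fails, the argument remains valid locally, which is consistent with the small-window framework used throughout.
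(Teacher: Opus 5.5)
Once you commit to the ordering you suggest at the end, $\Phi=\sigma(\Theta_H)\Psi_A$ (i.e.\ $\Psi_A=\sigma(\Theta_H)^{-1}\Phi$), your argument is the paper's proof. The paper takes a smooth local section $\mbf{s}$ and sets $\mbf{Q}=\mbf{s}(\mbf{H})$, $\mbf{a}=\mbf{Q}^{-1}\mbf{G}$. It derives $\dot{\mbf{a}}=(\mbf{Q}^{-1}\mbf{A}_\fr{g}\mbf{Q}-\mbf{Q}^{-1}\dot{\mbf{Q}})\mbf{a}$ and checks that $\theta$ sends both terms to $\mbf{H}^{-1}\bar{\mbf{A}}\mbf{H}$, so the generator lies in $\ker\theta=\fr{a}$. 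It then vectorizes with a Kronecker product. Your Step~3 generator, $\sigma^{-1}(\mbf{A}-\dot\sigma\sigma^{-1})\sigma=\sigma^{-1}\mbf{A}\sigma-\sigma^{-1}\dot\sigma$, is exactly this expression. With this ordering your ``main obstacle'' disappears: layer~1 is $\dot\Psi_A=\mbf{A}_1(x_1)\Psi_A$, linear in $\Psi_A$, in the given matrix representation of $\fr{a}\subseteq\fr{g}$.

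The ordering actually written in your Steps~1 and~4, $\Phi=\Psi_A\,\sigma(\Theta_H)$, does not work, and its corrections cannot be absorbed into $v$. From $\Psi_A=\Phi\sigma^{-1}$ you get
\[
\dot\Psi_A=\mbf{A}\Psi_A-\Psi_A\dot\sigma\sigma^{-1}=\bigl(\mbf{A}-\mathrm{Ad}_{\Psi_A}(\dot\sigma\sigma^{-1})\bigr)\Psi_A ,
\]
so the $\fr{a}$-valued generator depends on layer~1's own state, which $v$ never sees. You could instead read this as a linear ODE $\Psi\mapsto\mbf{A}\Psi-\Psi R$ with $R=\dot\sigma\sigma^{-1}$. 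Then layer~1 is generated by two-sided multiplications, whose brackets are $\Psi\mapsto[\mbf{A}(x),\mbf{A}(y)]\Psi-\Psi[R_1,R_2]$. This is nonzero whenever $\fr{g}$ is non-abelian, so layer~1 is no longer of class $\fr{a}$ (not even abelian when $\fr{a}$ is), and the result becomes useless for Cor.~\ref{col:Kstacks}. Hence the output map must be $(\Theta_H,\Psi_A,h_0)\mapsto\sigma(\Theta_H)\Psi_A h_0$, not $\Psi_A\sigma(\Theta_H)h_0$.

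There are two smaller points. First, the adjoint fallback in Step~2 fails. The induced adjoint representation of $\fr{h}$ on $\fr{g}/\fr{a}\cong\fr{h}$ has kernel $Z(\fr{h})$. For abelian $\fr{h}$ it is zero; this happens for the quotient $\fr{g}/[\fr{g},\fr{g}]$ reached when iterating in Cor.~\ref{col:Kstacks}. Layer~0 then cannot determine $\Theta_H$, so it cannot feed $\sigma(\Theta_H)$ to layer~1. Use Ado's theorem for a faithful matrix representation, as the paper implicitly does. Second, the long-horizon ``re-anchoring'' is unjustified: an SSM cannot reset its state, and switching sections at a window boundary forces $\Psi_A=\sigma^{-1}\Phi$ to jump. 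The paper's proof claims only local simulation near the identity, and yours should stop there too unless a global section is available.
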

Proof is provided in App.~\ref{app:sub_proof_depth_extension}.
An immediate corollary for the order-symmetric structure follows.
\begin{corollary}
\label{col:Kstacks}
    Consider $\mbf{S}_\fr{g}$ such that the derived length of $\fr{g}$ is $k$.
    Then, there exists an abelian $k$-layer SSM plus smooth output map
   that simulates $\mbf{S}_\fr{g}$.
\end{corollary}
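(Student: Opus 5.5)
We argue by induction on the derived length $k$, applying Thm.~\ref{thm:depth_extension} along the derived series. By Sec.~\ref{sec:LieAlgebra}, if $\fr{g}$ has derived length $k$, then $\fr{a} := \fr{g}^{(k-1)}$ is an abelian ideal. Moreover, $\fr{h} := \fr{g}/\fr{g}^{(k-1)}$ is solvable with derived length $k-1$. This gives the extension $0 \to \fr{a} \to \fr{g} \to \fr{h} \to 0$.

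The base case $k=1$ is immediate. Here $\fr{g}$ is abelian, so $\mbf{S}_\fr{g}$ is itself an abelian $1$-layer SSM, and the output map can be taken to be the identity.

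For the inductive step, we open up the construction behind Thm.~\ref{thm:depth_extension} rather than use it as a black box. We need two properties of the 2-layer SSM it produces.
\begin{itemize}
\item The first layer realizes the quotient dynamics $\mbf{S}_\fr{h}$. It is driven by the projected generator $\pi\circ\mbf{A}(x)$, with $\pi:\fr{g}\to\fr{h}$, and evolves on a group $H$ integrating $\fr{h}$.
\item The second layer evolves in $\fr{a}$, or in the group $\exp\fr{a}$. Its input $x_1 = v(h_0, x)$ is obtained by conjugating the $\fr{a}$-component of $\mbf{A}(x)$ back along the first-layer flow, i.e.\ a variation-of-constants or Wei--Norman style factorization $\Phi = \sigma(\Psi_\fr{h})\,\Psi_\fr{a}$ for a local section $\sigma$.
\end{itemize}
Since $\fr{a}$ is abelian, the second layer is an abelian SSM. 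We then apply the induction hypothesis to $\mbf{S}_\fr{h}$, which yields an abelian $(k-1)$-layer SSM with smooth output map $\varphi_\fr{h}$ simulating the first layer. We replace the first layer by this stack. The second layer's input becomes $v(\varphi_\fr{h}(h_0,\dots,h_{k-2}), x)$, which is again smooth in the earlier states and $x$, so Def.~\ref{def:stacking} admits it as the $k$-th layer. The final output map is $\varphi(h_0,\dots,h_{k-1}) = \sigma(\varphi_\fr{h}(\cdot))\,h_{k-1}$, possibly composed with the flow-to-state map of Eq.~\ref{eq:flow_to_state_submersion}. It is smooth, and it is a submersion because it is built from a submersion and group multiplication.

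The main obstacle is making this substitution rigorous. Thm.~\ref{thm:depth_extension} only asserts that some 2-layer simulator exists. We need the stronger form in which the first layer is exactly an $\mbf{S}_\fr{h}$ and the second is abelian with coefficients in $\fr{a}$, so that the lower layer can be swapped for any simulator of $\mbf{S}_\fr{h}$.

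Two further points need care. First, the coupling $v$ involves $\mathrm{Ad}$ of the first-layer state, which must remain smooth when that state is only available through the output map $\varphi_\fr{h}$. Second, the factorization through a local section $\sigma$ is only local, so the statement holds on small windows, as in the Magnus setting. It then extends over long horizons by the flow composition of Eq.~\ref{eq:flow_composition}, or globally whenever $G$ is simply connected, since solvable groups are then diffeomorphic to $\mathbb{R}^m$ and $\sigma$ can be chosen globally.

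If the construction in App.~\ref{app:sub_proof_depth_extension} has this form, the corollary follows by $k-1$ applications, and the abelian layers are exactly the successive quotients $\fr{g}^{(i)}/\fr{g}^{(i+1)}$.
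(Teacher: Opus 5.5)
Your proposal is correct and is essentially the paper's proof: apply Thm.~\ref{thm:depth_extension} to $0 \to \fr{g}^{(k-1)} \to \fr{g} \to \fr{g}/\fr{g}^{(k-1)} \to 0$, then iterate on the quotient layer, which is solvable of derived length $k-1$. The stronger form you say you need is exactly what App.~\ref{app:sub_proof_depth_extension} constructs, namely $\mbf{G} = \mbf{s}(\mbf{H})\mbf{a}$ with quotient layer $\dot{\mbf{H}} = \bar{\mbf{A}}\mbf{H}$ and second layer $\dot{\mbf{a}} = (\mbf{Q}^{-1}\mbf{A}_{\fr{g}}\mbf{Q} - \mbf{Q}^{-1}\dot{\mbf{Q}})\mbf{a}$, where $\mbf{Q} = \mbf{s}(\mbf{H})$ and the generator lies in $\fr{a}$ (it reduces to your ``conjugated $\fr{a}$-component'' only when the section is a homomorphism), so your explicit routing of the coupling through $\varphi_{\fr{h}}$ just spells out what the paper's ``iteratively apply'' leaves implicit.
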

Proof is provided in App.~\ref{app:sub_proofKstacks}.
Cor.~\ref{col:Kstacks} shows that deep abelian SSMs are quite powerful as the deep structure permits abelian layers to generate order-sensitive solvable flows.

We again use MDPs to build tangible intuition. As noted in Table~\ref{tab:words}, different orders of action execution generally induce different action-dependent transition matrices.
    Except in simple cases (e.g. cyclic gridworld), modeling order-sensitive dynamics is critical for a sequence model to represent an MDP or serve as a learned \emph{world model} in model-based reinforcement learning \cite{chen2022transdreamer}.
    Thm.~\ref{thm:depth_extension} and Cor.~\ref{col:Kstacks} characterize how many layers are needed to represent \emph{compositions} of actions with the correct transition structure.
    In this view, each layer tracks a collection of \emph{commuting} components of dynamics, whose composition reconstructs non-commuting transitions.
\paragraph{Approximation}
We now turn our attention to approximation.
Cor.~\ref{col:Kstacks} naturally maps to the well-known control theory literature.
Given an input path on a short interval $[0,T]$,
we define the local \emph{generator mass} of $\mbf{S}$:
    \begin{equation*}
        \epsilon = \int_0^T \|\mbf{A}(x(t))\| \, dt < 1
        \quad \text{(for sufficiently small $T$)}.
    \end{equation*}
\begin{corollary}
\label{col:nilpotentization}
    For non-solvable $\mbf{S}_\fr{g}$, there exists an abelian $k$-layer SSM whose
    approximation error scales with $\mcal{O}(\epsilon^{2^k})$.
\end{corollary}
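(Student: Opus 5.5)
The plan is to approximate the target flow by a solvable one obtained from a nilpotent truncation of the free Lie algebra, and then to realize that solvable flow exactly with Cor.~\ref{col:Kstacks}. The filtration used is the derived series of the free Lie algebra, because its $k$-th term is generated by brackets of length at least $2^k$.

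\textbf{Step 1 (Magnus expansion).} On a window $[0,T]$ with generator mass $\epsilon<1$, write $\Phi(T,0)=\exp\Omega$ with $\Omega=\sum_{m\ge1}\Omega_m$. Here $\Omega_m$ is a linear combination of iterated integrals of $m$-fold nested brackets of $\mbf{A}(x(t_i))$, and $\|\Omega_m\|\le C^m\epsilon^m$ by the standard convergence estimate (App.~\ref{app:CDE}). Formally, $\Omega$ lives in the completed free Lie algebra $\mcal{L}$ generated by the input-indexed symbols $\{\mbf{A}(x)\}$, and $\fr{g}$ is a quotient of $\mcal{L}$.

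\textbf{Step 2 (Truncated model).} Let $\mcal{L}^{(k)}$ be the derived series of $\mcal{L}$. By induction, $\mcal{L}^{(k)}$ is contained in the span of brackets of length at least $2^k$, since $[\mcal{L}^{(k-1)},\mcal{L}^{(k-1)}]$ doubles the minimal length. The quotient $\fr{s}_k:=\mcal{L}/(\mcal{L}^{(k)}+\mcal{L}^{2^k})$ is finite-dimensional, because it is a quotient of the free nilpotent algebra of class $2^k-1$ on finitely many generators. Finitely many suffice because $\mbf{A}$ is analytic and spans a finite-dimensional subspace of $\fr{g}\subseteq\fr{gl}(n)$. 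The quotient $\fr{s}_k$ is solvable with derived length at most $k$.

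Take a faithful finite-dimensional representation of $\fr{s}_k$ (Ado), and let $\hbf{S}_{\fr{s}_k}$ be the SSM driven by the images of the generators, so that its flow is $\exp\bigl(\pi_k(\Omega)\bigr)$ with $\pi_k$ the quotient map. The Magnus series is natural under Lie homomorphisms, which justifies this formula for the flow. By Cor.~\ref{col:Kstacks}, $\hbf{S}_{\fr{s}_k}$ is exactly simulated by an abelian $k$-layer SSM plus a smooth output map.

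\textbf{Step 3 (Output and error).} Define the output map by sending $\exp(\pi_k\Omega)$ to $\exp(\sigma(\pi_k\Omega))h_0$, where $\sigma$ is a linear section sending truncated bracket words to their evaluations in $\fr{g}$. This map is smooth near the identity. The discrepancy $\Omega-\sigma\pi_k\Omega$ then consists only of terms lying in $\mcal{L}^{(k)}+\mcal{L}^{2^k}$, all of which have bracket length at least $2^k$. Hence $\|\Omega-\sigma\pi_k\Omega\|\le\sum_{m\ge2^k}C^m\epsilon^m=\mcal{O}(\epsilon^{2^k})$. The error is nontrivial in general because $\fr{g}$ is non-solvable, and Lemma~\ref{lem:sim_impossible} and Thm.~\ref{thm:sim_error} show that some error is unavoidable. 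Lipschitz continuity of $\exp$ on a bounded set then transfers this bound to the state error $\|\Phi h_0-\hat h(T)\|$.

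\textbf{Main obstacle.} The hardest step is to make the truncation in Step 2 compatible with the exact simulation of Cor.~\ref{col:Kstacks}. Cor.~\ref{col:Kstacks} controls derived length rather than nilpotency class, and $\mcal{L}/\mcal{L}^{(k)}$ alone is infinite-dimensional, which is why I also quotient by $\mcal{L}^{2^k}$. The delicate points are:
\begin{itemize}
\item checking that $\sigma$ yields a well-defined smooth output map, which I would handle by working in exponential coordinates of the second kind on the simply connected group of $\fr{s}_k$;
\item checking that the Magnus remainder bound is uniform in the input path for $\epsilon<1$.
\end{itemize}
If the section argument fails, my fallback is a Chen–Fliess / Lyndon-word argument: an ordered product of exponentials of solvable components reproduces the flow up to order $2^k-1$.
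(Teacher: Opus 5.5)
Your proposal is correct and follows the paper's route. You truncate the Magnus/free-Lie-algebra expansion to a nilpotent algebra of class $2^k-1$, use $\fr{h}^{(k)}\subseteq\fr{h}^{2^k-1}$ to get derived length at most $k$, realize it exactly with Cor.~\ref{col:Kstacks}, and bound the discarded tail $\sum_{m\ge 2^k}\Omega_m$ by $\mcal{O}(\epsilon^{2^k})$. Since you yourself show $\mcal{L}^{(k)}$ lies in brackets of length at least $2^k$, the extra quotient by $\mcal{L}^{(k)}$ is redundant, so $\fr{s}_k$ is exactly the paper's free nilpotent truncation, and your evaluation map $\sigma$ spells out the comparison back to $\fr{g}$ that the paper leaves implicit.
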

Proof is provided in App.~\ref{app:sub_proof_nilpotentization}.
Note that Thm.~\ref{thm:sim_error}
corresponds to $k=1$. $\|\Omega_2\|$ scales in $\mcal{O}(\epsilon^2)$.

Cor.~\ref{col:nilpotentization} is a promising result: although algebraically obstructed,
deep structure exponentially mitigates the order-sensitive error.
The local error can again extend to long horizons via Eq.~\ref{eq:flow_composition}.

For word problems, we treat each input symbol as a piecewise-constant input. Under piecewise-constant inputs, the SSMs in our theory can be considered as discrete-time sequence models.
\begin{proposition}
\label{prop:logdepth}
    Any word problem with word length bounded by $T$
    can be simulated by an
    abelian deep SSM with at most $\bigl(\lfloor \log_2T \rfloor + 1\bigr)$-layers and a smooth output map.\footnote{
Similar logarithmic depth result for transformers was reported in \citet{liu2022transformers},
and empirically observed later \cite{hu2024limitation}.
}
\end{proposition}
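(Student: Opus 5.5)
The plan is to simulate the word evaluation with a parallel-prefix (binary-tree) scheme, implementing each level of the tree by one abelian layer. A word problem has finite monoid $M$ and words of length at most $T$, so the set of input histories is finite: at most $\sum_{t\le T}|A|^t$ words. Under piecewise-constant inputs each symbol occupies a unit time interval, so an SSM becomes a discrete-time recurrence $h_t = \exp(\mbf{A}(a_t))h_{t-1} + \tilde{\mbf{b}}(a_t)$.

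The first layer will be an abelian SSM that records the input sequence injectively. A concrete choice is diagonal (hence commuting) generators, with a separate coordinate block per position and a translation term depending only on the symbol. This is exactly the "restricted/abelian but order-sensitive through $\mbf{b}$" mechanism noted after Def.~\ref{def:staticLPVsystem}. Writing $h_t=\sum_{s\le t}\lambda^{t-s}e_{a_s}$ with distinct decay rates, the map from words of length $\le T$ to states is injective. The injectivity comes from Vandermonde-type independence on the finite set of words.

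Layer $i$ will then compute products over dyadic blocks of length $2^{i-1}$. The smooth map $v_i$ reads the previous layers' states, which already encode the products of aligned blocks of size $2^{i-2}$. It feeds the current layer an input symbol equal to the monoid element of the most recent completed block of length $2^{i-1}$, obtained as the product of two adjacent half-blocks. Because only finitely many states occur, any function on them extends to a smooth $v_i$: separate the finitely many points by disjoint neighborhoods and take a smooth bump interpolation. Each layer therefore only performs a fixed, lookup-type combination of two already-available elements. This is order-insensitive as a layer operation, since it is a pointwise function of the current states, so an abelian generator with an injective $\mbf{b}$ suffices to store it. After $\lfloor\log_2 T\rfloor+1$ layers, the binary expansion of the current length $t$ decomposes $a_1\cdots a_t$ into at most $\lfloor \log_2 T\rfloor+1$ dyadic blocks, whose products are all encoded somewhere in the stacked states. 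The smooth output map then multiplies these blocks in order, again via a finite-set smooth interpolation, which yields $f(a_1\cdots a_t)$.

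The main obstacle is the injectivity and bookkeeping in the first step. The states must retain enough positional information for the $v_i$ to extract aligned block products at arbitrary $t$, without circularly hiding the full word evaluation in layer one. I would first try to keep every layer simple: each layer outputs only its block products, so the layer count is what is doing the work. If carrying this out cleanly proves hard, the fallback is to note that the finite-history injective encoding together with the smooth output map already suffices, and that the dyadic layering gives the stated depth bound. That would still be consistent with the statement's "at most".
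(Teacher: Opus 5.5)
\textbf{There is a genuine gap: your layers are restricted, not abelian.} In the paper's taxonomy, an abelian SSM is one whose homogenized generators $\bigl(\begin{smallmatrix}\mbf{A}(x) & \mbf{b}(x)\\ 0 & 0\end{smallmatrix}\bigr)$ commute. Besides $[\mbf{A}(x),\mbf{A}(y)]=0$, this requires $\mbf{A}(x)\mbf{b}(y)=\mbf{A}(y)\mbf{b}(x)$. Layers with commuting $\mbf{A}$ but order sensitivity through $\mbf{b}$ form the strictly larger \emph{restricted} class, whose $k$-layer stacks Prop.~\ref{lem:stackderivedlength} bounds by derived length $2k$ rather than $k$.

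Your encoder $h_t=\sum_{s\le t}\lambda^{t-s}e_{a_s}$ pairs a nonzero diagonal $\mbf{A}$ with a symbol-dependent $\mbf{b}$, so $\mbf{A}(x)\mbf{b}(y)\neq\mbf{A}(y)\mbf{b}(x)$. It is restricted, not abelian. Nor can any abelian bottom layer replace it. Driven by $x$ alone, its state after $t$ symbols is $\exp\bigl(\sum_{s\le t}\bar{\mbf{A}}(a_s)\bigr)\bar h(0)$, which depends only on the symbol counts; this is the permutation invariance used in the proof of Lemma~\ref{lem:sim_impossible}.

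The same problem recurs in your upper layers. Keeping only the \emph{most recent} block product means overwriting older inputs, but an abelian layer can only accumulate its inputs commutatively. Your fallback (one injective layer plus a smooth readout) therefore yields a single restricted layer, not an abelian deep SSM. It also shows that the dyadic scaffolding carries no weight in your argument.

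\textbf{The missing idea.} In an abelian stack, order sensitivity has to come from the inter-layer coupling $v_i$: a higher layer's generator depends on the lower layers' states. The paper uses exactly this. It sends each symbol $a$ to the generator $e_a$ of the free nilpotent Lie algebra $\fr{L}/\fr{L}^T$, whose basis is the Lyndon words of length at most $T$. It bounds the derived length by $\lfloor\log_2T\rfloor+1$ via $\fr{h}^{(k)}\subseteq\fr{h}^{2^k-1}$. It then applies Cor.~\ref{col:Kstacks}, whose proof (Thm.~\ref{thm:depth_extension}) gives the ideal layer the generator $\mbf{Q}^{-1}\mbf{A}\mbf{Q}-\mbf{Q}^{-1}\dot{\mbf{Q}}$, which depends on the quotient layer's state $\mbf{H}$.

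If you want to keep your encode-then-read-out strategy, the same mechanism repairs it directly:
\begin{itemize}
\item Let layer one be an abelian clock ($\mbf{A}_0=0$, $\mbf{b}_0=1$, so $h_0(t)=t$).
\item Let layer two be a pure translation layer with $\mbf{A}_1=0$ and $\mbf{b}_1(h_0,x)=x\otimes(1,h_0,\ldots,h_0^{T-1})$ for one-hot $x$.
\end{itemize}
Both layers are abelian. Layer two records, for each symbol $a$ and each $k<T$, the integral of $\tau^k$ over the times at which $a$ was read. These moments determine the word, because no nonzero polynomial of degree less than $T$ integrates to zero over every $[s-1,s]$, $s=1,\ldots,T$ (an antiderivative would take equal values at $T+1$ points). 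A smooth map on the finitely many reachable states then returns $f(a_1\cdots a_t)\in M$. This uses two abelian layers, which is within the stated bound for $T\ge 2$; for $T=1$ a single translation layer suffices.
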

Proof is provided in App.~\ref{app:sub_proof_logdepth}.
Prop.~\ref{prop:logdepth} provides an explicit upper bound on the depth required to simulate discrete, finite-length sequences.
Note that this is the worst-case depth upper bound:
much shallower SSMs may successfully simulate a well-structured word problem.
For example, a 1-layer abelian SSM suffices to simulate an abelian word problem for any $T$.

Lastly, we note that solving the bounded length word problem can come at a cost of state expansion.
\begin{corollary}
\label{col:logdepth_state}
Consider
an alphabet of size
$|\mcal{A}| = n$ and word length bounded by $T$.
Then, the total state space dimension of the abelian deep SSM constructed in Prop.~\ref{prop:logdepth} exponentially grows in $T$ for fixed $n$ with asymptotic order $\mcal{O}(n^T/T)$.
\end{corollary}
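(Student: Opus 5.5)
We need to sketch the proof of Cor. \ref{col:logdepth_state}: the total state dimension of the construction in Prop.~\ref{prop:logdepth} is $\mcal{O}(n^T/T)$.

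Since the construction of Prop.~\ref{prop:logdepth} is only known to us through its statement, the plan is to reconstruct the most natural version of it and count dimensions. For bounded word length $T$, the relevant Lie algebra is the free nilpotent Lie algebra on $n$ generators, one per alphabet symbol, truncated at step $T$. Products of at most $T$ matrices only see iterated brackets of length at most $T$, which is why words of length $\le T$ can be evaluated exactly. The logarithmic depth comes from the fact that the free nilpotent algebra of class $c$ has derived length $\lceil \log_2(c+1)\rceil$: each derived step at least doubles the minimal bracket length. So Cor.~\ref{col:Kstacks} yields $\lfloor \log_2 T\rfloor+1$ abelian layers.

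\textbf{Step 1 (what each layer tracks).} Following the proof of Thm.~\ref{thm:depth_extension}, each abelian layer stores the coordinates of one abelian quotient in the derived tower $\fr{g}^{(i)}/\fr{g}^{(i+1)}$, possibly plus the lift needed to couple it to the previous layer. I expect that total state dimension is bounded above and below, up to constant factors, by $\dim \fr{g}$. Here $\fr{g}=\fr{f}_{n,T}$ is the free nilpotent Lie algebra of step $T$ on $n$ generators. The lower bound should hold because the layers jointly realize the flow on $G$, and minimality (App.~\ref{app:control_detail}) forbids compressing below the orbit dimension. The upper bound should hold because the derived quotients partition a graded basis of $\fr{g}$.

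\textbf{Step 2 (Witt's formula).} The dimension of the degree-$m$ homogeneous component of the free Lie algebra is
\begin{equation*}
\dim \fr{f}_m = \frac{1}{m}\sum_{d\mid m}\mu(d)\, n^{m/d},
\end{equation*}
so
\begin{equation*}
\dim \fr{f}_{n,T}=\sum_{m=1}^{T}\dim\fr{f}_m .
\end{equation*}
The leading term of the top degree is $n^T/T$, and the divisor corrections are $\mcal{O}(n^{T/2})$. The lower-degree terms sum to at most
\begin{equation*}
\sum_{m<T} \frac{n^m}{m},
\end{equation*}
which is bounded by a constant times $n^{T-1}/(T-1)$. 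That is $\mcal{O}(n^T/T)$ for fixed $n\ge 2$, by comparison with a geometric series. Hence the total is $\Theta(n^T/T)$, which grows exponentially in $T$.

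\textbf{Main obstacle.} The hard part is Step 1: making sure the construction in Prop.~\ref{prop:logdepth} carries no extra multiplicative overhead per layer that would change the asymptotics. Examples would be duplicated copies of lower-layer states fed through the maps $v_i$, or homogenization dimensions. I would first argue that each layer's state is the coordinates of the second-kind exponential along a Hall basis restricted to that derived level. The summed dimension across layers is then exactly $\dim\fr{f}_{n,T}$, plus at most $\mcal{O}(\log T)$ affine/homogenization coordinates, which is negligible. If the construction instead acts on the representation space (for example, truncated tensor algebra of dimension $\sim n^T$), the same geometric-sum argument still gives an exponential order. In that case I would note that the Lie-algebraic (flow) coordinates achieve the sharper $\mcal{O}(n^T/T)$.
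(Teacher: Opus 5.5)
Your proposal follows the paper's argument: the paper's proof likewise identifies the state dimension of the construction with $\dim(\mathfrak{L}/\mathfrak{L}^T)$, the number of Lyndon words of length at most $T$, and simply quotes Witt's formula, so your Step 2 supplies the $\Theta(n^T/T)$ estimate that the paper leaves implicit. Your Step 1 caveat is well taken: the paper does not address that the Kronecker vectorization in the proof of Thm.~\ref{thm:depth_extension} gives a state of dimension at least $\dim\mathfrak{g}$ (typically much larger), so the $\mathcal{O}(n^T/T)$ count implicitly relies on exponential coordinates on each abelian layer, as you propose.
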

Proof is provided in App.~\ref{app:sub_proof_logdepth_state}. Again, this is a worst-case width upper bound following Prop.~\ref{prop:logdepth}.
Together, Prop.~\ref{prop:logdepth} and Cor.~\ref{col:logdepth_state} clarify how depth and width play roles in constant length simulation.
\begin{remark}[Conjecture on restricted SSMs]
\label{rem:conjecture_restricted_limits}
    Thm.~\ref{thm:sim_error} suggests that restricted SSMs do not substantially extend expressivity beyond abelian SSMs. Although restricted SSMs can be order-sensitive (Sec.~\ref{sec:main_affine_vector_field}), their structural constraints prevent them from simulating general SSMs.
    We conjecture that there exists a $\mbf{S}_\fr{g}$ of derived length $k+1$ that cannot be simulated by any restricted $k$-layer SSM.
\end{remark}

\section{Experiments}
\label{sec:exp}
Here we use experiments as diagnostic tests of the theory.
We examine symbolic word problems (WPs; Sec.~\ref{sec:exp_wp}) covering those that are abelian ($C_2$, $C_3$), nilpotent ($D_8$, $H_3$), solvable ($S_3$, $S_4$) or non-solvable ($A_5$), as well as a physical rotation prediction problem based on $A_5$ (Sec.~\ref{sec:exp_rotation}).

Regarding the model architecture, in addition to the basic transformer \citep{trafo}, we study three types of diagonal structured SSMs: GLA \citep{YangWSPK24}, signed Mamba \citep{grazzi2024unlocking}, and AUSSM \citep{karuvally2025bridging}.
Generators of each models are positive, signed, and complex diagonal matrices, respectively,
thus can be viewed as discretized restricted SSMs.
DeltaProduct \citep{siems2025deltaproduct} is included as a general SSM class reference.

\begin{table}[t]
\setlength{\tabcolsep}{0.4em}
\caption{Length generalization performance measured as sequence-level accuracy for various categories of word problems. Models are trained on length 128 and tested on 256. $L$ denotes the number of layers. DeltaProduct uses the theoretically necessary number of Householder products depending on the problem. The blue color cell highlights perfect generalization.
The hyper-parameter search space is in App.~\ref{app:exp_wp}.
The number of training sequences is 500K except for $C_2$ which used 100K.
}
\vspace{-2mm}

\label{tab:word_problem}
\begin{center}
\small
\begin{tabular}{lccccccc}
\toprule
& & \multicolumn{2}{c}{Abelian} & \multicolumn{2}{c}{Nilpotent} & \multicolumn{2}{c}{Solvable}  \\
 \cmidrule(r){3-4} \cmidrule(r){5-6} \cmidrule(r){7-8} 
Model   & $L$      & \multicolumn{1}{c}{$C_2$} & \multicolumn{1}{c}{$C_3$} & \multicolumn{1}{c}{$D_8$} & \multicolumn{1}{c}{$H_3$} & \multicolumn{1}{c}{$S_3$} & \multicolumn{1}{c}{$S_4$} \\ \midrule
Transformer  & 1 & 0.00 & 0.00 & 0.00 & 0.00 & 0.00 & 0.00 \\ \midrule
(+-only) Mamba & 1 & 0.00 & 0.00 &  0.00  & 0.00 & 0.00 & 0.00  \\  \midrule
(+-only) GLA & 1 &  0.00  & 0.00 & 0.00 & 0.00 & 0.00 & 0.00 \\ 
                  & 2 &  0.00  & 0.00 & 0.00 & 0.00 & 0.00 & 0.00  \\ \midrule 
Signed Mamba          & 1 & \cellcolor{blue!10} 1.00 & 0.00 & 0.00 &  0.00  & 0.00 & 0.00   \\  
& 2 & \cellcolor{blue!10} 1.00 & 0.00 & \cellcolor{blue!10} 1.00 & \cellcolor{blue!10} 1.00 & \emph{0.19} & 0.00 \\  \midrule
AUSSM    & 1 & \cellcolor{blue!10} 1.00 & \cellcolor{blue!10} 1.00 & 0.00 &  0.00  & 0.00 & 0.00    \\  
    & 2 & \cellcolor{blue!10} 1.00 & \cellcolor{blue!10} 1.00 & 0.00 & 0.00 & 0.00 & 0.00 \\  \midrule
DeltaProduct & 1 & \cellcolor{blue!10} 1.00 &  \cellcolor{blue!10}  1.00 & \cellcolor{blue!10} 1.00  & \cellcolor{blue!10} 1.00 & \cellcolor{blue!10} 1.00 &\cellcolor{blue!10} 1.00   \\
\bottomrule
\end{tabular}
\end{center}
\vspace{-6mm}
\end{table}

\subsection{Length generalization on solvable word problems}
\label{sec:exp_wp}
We first evaluate sequence models on solvable WPs to test models' algebraic capacity.
Following prior work \citep{merrill2024illusion,siems2025deltaproduct}, the model predicts the finite group element obtained by evaluating the word (or, the prefix product) at each position. Models are trained on sequences of length 128 and tested on length up to 256.
Results are shown in Table \ref{tab:word_problem} and summarized as follows.

\textbf{Abelian WPs.}
We evaluate models on abelian WPs as a smoke test. As any abelian finite group embeds into abelian Lie group, a 1-layer abelian SSM can express abelian WPs in theory.

Empirically, echoing prior work, we observe that transformer, positive diagonal only (denoted as +-only) GLA, and Mamba failed to model the simplest abelian WP, $C_2$.
In contrast, Signed Mamba can solve $C_2$, while failing on $C_3$, aligned with \citet{grazzi2024unlocking}. AUSSM succeeded at both $C_2$ and $C_3$, matching its richer complex-valued diagonal structure.

\textbf{Nilpotent and Solvable WPs.}
We further evaluate models on solvable but non-abelian WPs.
Theoretically, $D_8$, $H_3$, and $S_3$ are extensions of two abelian groups. Following Thm.~\ref{thm:depth_extension}, these WPs can be represented by 2-layers of \emph{abelian} SSMs, while 1 layer is insufficient (Thm.~\ref{thm:sim_error}).
Consistent with the prediction, none of the 1-layer models except DeltaProduct successfully solved these tasks.

With 2 layers, we observe that the inductive bias significantly influences model behavior.
Signed Mamba successfully learned $D_8$ and $H_3$
and partially generalizes on $S_3$.
In contrast, AUSSM fails to learn a generalizable solution for
non-abelian word problems in our setting. This points to the \emph{learnability} issue:
our theory describes expressivity, but gradient-based optimization and architecture-specific inductive biases determine whether the solution is easy to find.
Signed Mamba has a more restricted eigenvalue range than AUSSM,
but empirically performed better in the $2$-layer regime.\footnote{see further comments on the trainability of AUSSM in App.~\ref{app:exp_rotation}}

$S_4$ is an extension of 3 abelian groups. As diagonal structured SSMs belong to \emph{restricted} SSM class, one may solve $S_4$ with 2 layers (Prop.~\ref{lem:stackderivedlength}). We conjectured that this would not be the case (Rem.~\ref{rem:conjecture_restricted_limits}).
\subsection{Depth vs.~length in non-solvable word problems.}
\begin{wrapfigure}{r}{0.55\textwidth}
    \begin{center}
        \includegraphics[width=0.53\columnwidth]{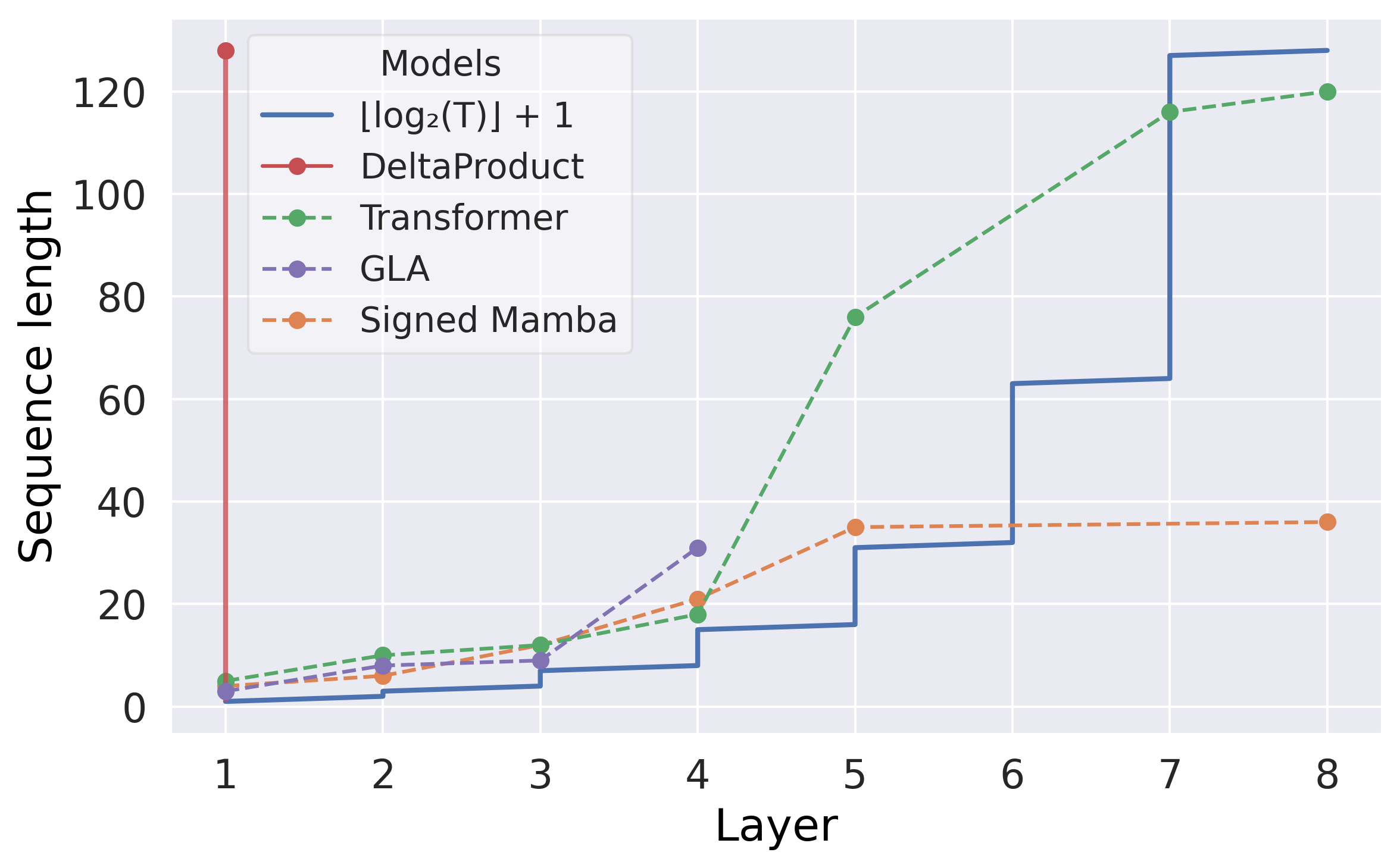}
        \caption{Maximum sequence length
        on the $A_5$ WP
        for each model with varying number of layers to achieve $>90\%$ training sequence accuracy.
        All models are trained on length up to 128. Deep models ($>$ 4 layers) that failed to achieve a longer sequence length than shallower models are not shown.
        $T$ in the legend $\bigl(\lfloor \log_2T \rfloor + 1\bigr)$ is sequence length (i.e., y-axis).
        The reference curve is the upper bound from Prop.~\ref{prop:logdepth}.
}
        \label{fig:seqlen_vs_layer}
    \end{center}
\end{wrapfigure}
Here we focus on $A_5$ as the smallest non-solvable WP to illustrate how the depth enhances performance
beyond models' expressivity range, in line with Prop.~\ref{prop:logdepth}.

Fig.~\ref{fig:seqlen_vs_layer} reports the maximum sequence length
for which each model achieves $>90\%$ accuracy at a given depth. We compared results to the theoretical depth upper bound from Prop.~\ref{prop:logdepth}.

We observe that transformer clearly benefits from depth.
By varying the depth from 1 to 8, transformer performance improves
following the trend of the theoretical bound.
GLA and Signed Mamba similarly improve up to depth 4 or 5.

However, we observe that deeper models are hard to train on this task; some deep models underperform shallower ones. 
For example, the 6- and 7-layer Signed Mamba did not improve over the 5-layer one, and 8-layer model performed only slightly better (achieving length 36 vs.~35).

What hinders the performance of deeper models remains an open question.
Consideration of several factors may offer some insight.
Our theory constructs on real arithmetic whereas empirical computation is on finite precision; the impact of finite precision is beyond the scope of the current work. 
Moreover, how end-to-end gradient-based optimization shapes the solution is unclear,
again illustrative of the practical learnability issue discussed in the previous section.
\subsection{Rigid-body rotation problem}
\label{sec:exp_rotation}
Finally, to demonstrate the relevance of our results beyond symbolic word problems, we conduct experiments with a continuous-valued state-tracking regression problem with explicit algebraic class.

The $A_5$ group acts as a rotational symmetry group of a dodecahedron.
Two chosen \emph{generators} $\{\mbf{P}, \mbf{R}\}$, each corresponding to the $120^\circ$ and $72^\circ$ rotation, generate $A_5$ group elements.
We chose a vector on a unit sphere as the initial state.
The model observes a sequence of $A_5$ elements, and the task is to predict the corresponding transformed vector after each rotation.
Algebraically, the problem corresponds to the action of $A_5$ embedded as a finite subgroup of $SO(3,\bb{R})$. Further experimental details are provided in App.~\ref{app:exp_rotation}.
\begin{figure}[t]
    \begin{center}
        \includegraphics[width=1.0\columnwidth]{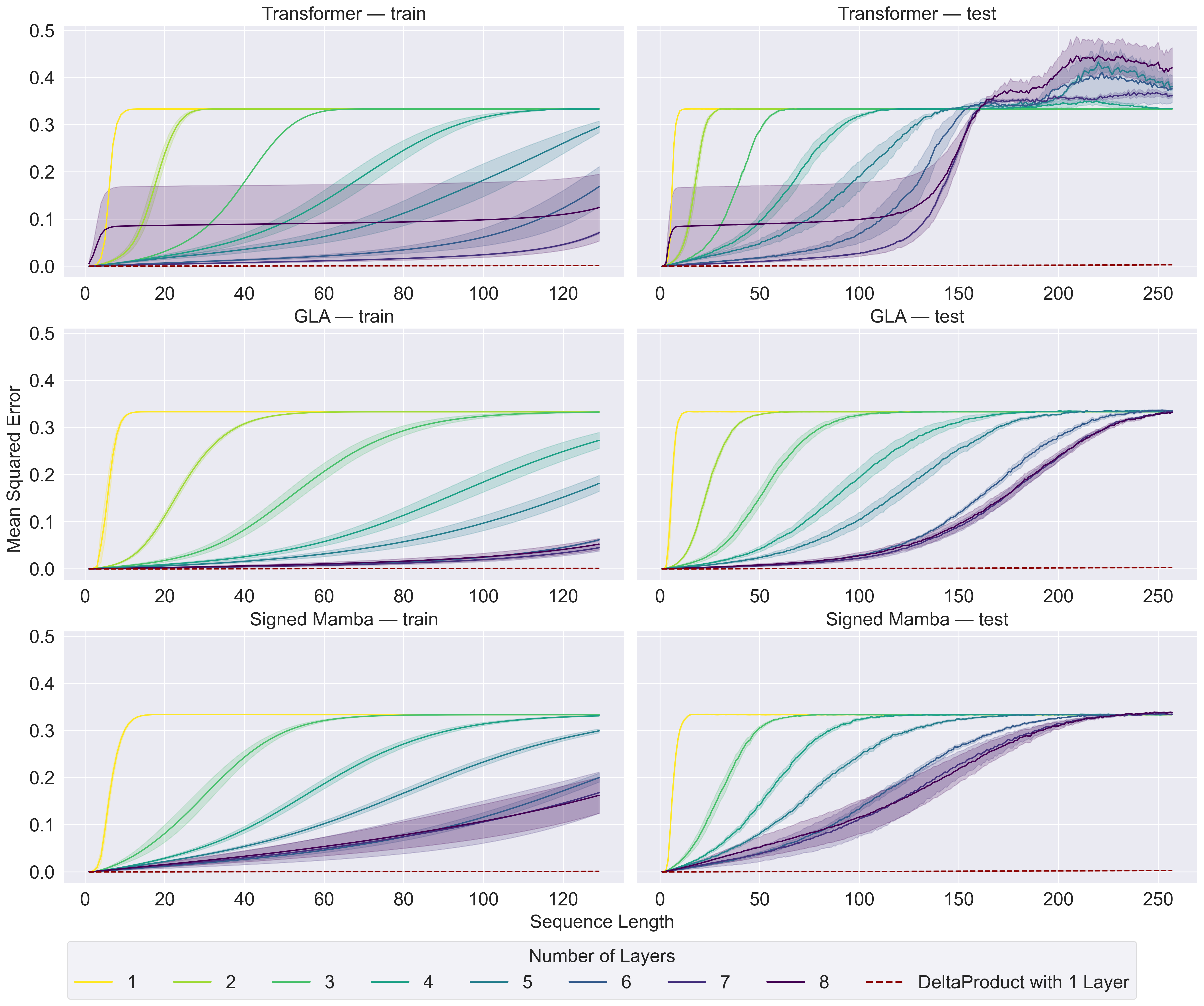}
        \caption{Mean squared error on the rotated vector prediction task at various sequence lengths for transformer, GLA, and signed Mamba, with different numbers of layers. Left and right columns illustrate the results on training and test sets, respectively. Standard error is shown using 3 seeds. Performance of DeltaProduct with 4 Householder products is shown as a reference.
}
        \label{fig:rotation_mse}
    \end{center}
     \vspace{-1mm}
\end{figure}

Fig.~\ref{fig:rotation_mse} shows prediction error as a function of depth for transformer, GLA, and Signed Mamba.\footnote{We found AUSSM to be generally unstable for this task; App.~\ref{app:exp_rotation} provides further discussions and corresponding results.}
Analogous to the symbolic tasks, depth plays a critical role in reducing the error.
However, learnability issues again appear.
The gains saturate for deep GLA and Signed Mamba beyond 6 layers, and deep transformer shows training variability. Taken together, the task supports the same perspective: depth improves order-sensitive approximation, while learnability remains a practical bottleneck.
\section{Discussion}
\label{sec:discussion}
Our theory connects the depth of sequence models to an abstract, algebraic extension theory.
The connection provides an optimistic message: when the task admits such an algebraic decomposition, depth gives an exact representation by composing simpler layers.
When the algebraic obstruction precludes an \emph{exact} simulation,
deep architectures compensate to rapidly minimize the error.
This conclusion is supported by our experiments. Across symbolic and physical tasks, we observe that increasing depth systematically reduces error on tasks outside each model's theoretical expressivity range. Our observations suggest depth as a structural mechanism that mitigates complex order sensitivity.
This also motivates parallelizable models with adaptive depth \citep{dehghani2018universal,csordas2021devil,csordas2021ndr,csordas2024moeut,tan2023sparse}.
\subsection{Future directions}
Our results naturally suggest several future directions.
One is to examine the impact of positional encoding (PE).
Although additive PE is unlikely to unlock expressivity \cite{liu2022transformers},
an advanced form of multiplicative PE does boost expressivity \cite{yang2025path}.
Understanding, or constructing, such mechanisms through algebraic structure
is a promising and practical avenue of research.

Orthogonally, our theory assumes real arithmetic and may be sensitive to the use of finite precision in real-world applications.  On the one hand, exponentially vanishing error may fall below numerical resolution, potentially blurring the algebraic obstruction. On the other hand, finite precision may constrain learnability of the system under gradient-based optimization, which we have not studied. How finite precision interacts with our findings and impacts expressivity needs to be deeply considered.

\subsection{Related Work}
\paragraph{Transformer variants}
The expressivity of sequence models is often studied via computational complexity and formal language theory \citep{GilesSCLC89,gers2001lstm,schmidhuber2001evaluating,weissGY18,hahn2020theoretical,MerrillWGSSY20,BhattamishraAG20,deletang2022neural,irie2023practical,StroblMW0A24,beck2024xlstm}.
\citet{liu2022transformers} showed that constant-depth transformers can simulate semiautomata with bounded sequence length via the Krohn-Rhodes theorem.
They further showed that no constant-depth transformer simulates general semiautomaton with arbitrary length sequence unless $\text{TC}^0 = \text{NC}^1$, and empirically showed the brittleness of the solution a transformer learns on fixed sequence length.
Subsequent work extends these limitations to Markov model variants, with depth scaling logarithmically in sequence length 
\citep{hu2024limitation}, or via graph theory to learn directed acyclic graph structures \citep{cheng2025transformers}.

\paragraph{Structured SSMs}
\citet{merrill2024illusion} extended circuit complexity theory to SSMs. They proved that constant-depth commuting SSMs, such as Mamba, admit $\text{TC}^0$ simulations and thus share similar expressivity limits with Transformers.
They also proposed architectural augmentations such as input-dependent, non-diagonal SSMs that can recover expressivity at increased computational cost.

Recent work aims to improve SSM expressivity at tolerable computational expense.
\citet{siems2025deltaproduct} leverage input-dependent compositions of generalized Householder transformations, grounded on \citet{schlag2021linear} and \citet{yang2024parallelizing},
with theoretical guarantees on regular language recognition.
Complementarily, another line of work leverages controlled differential equation (CDE) and rough path theory \cite{muca2024theoretical}.
Building on \citet{walker2024log},
\citet{walker2025structured} proposed structured linear CDEs (SLiCEs)
and proved their expressivity range.
Recent work \cite{anonymous2025the} proved $L$-layer complex-valued diagonal SSMs can track solvable groups of $\le L$ derived length, with empirical learnability gap.
\section{Conclusion}
Scalable sequence models achieve parallelism by imposing order symmetry, which limits their ability to \emph{exactly} solve many sequence problems. We make this performance gap measurable by Lie-algebraic control theory, and connect the depth of sequence models to algebraic extension theory. Our theory shows that, even when exact simulation is not possible, the order-sensitive approximation error vanishes exponentially with depth.
Experiments on symbolic word problems and 3D rotation state-tracking task supports our theory, while also exposing practical challenges such as the instability during training of deep models. Together, we provide an algebraic perspective for understanding depth-dependent error-expressivity in parallelizable sequence models.

\section{Limitations}
\label{sec:limitations}
Our work focuses on expressivity and error approximation, rather than learnability, optimization, or inductive biases which are other important challenges for sequence models.
While our theory offers guidance for model selection when the target task has an explicit algebraic structure, the exact algebraic class of real-world problems is rarely known.
In addition, our main contributions are of a  theoretical nature: we provide a theoretical framework to study how depth benefits certain sequence models. Although our theoretical results are relevant at any scale, large-scale experiments in language modeling or multimodal settings are out of scope.

\section*{Software and Data}
Our code is available at:\\
\url{https://github.com/kazuki-irie/lie-algebra-state-tracking}.

\begin{ack}
We thank Byungwoo Kang, T. Anderson Keller, Alireza A. Dehaqani, and David Sabatini for helpful feedback on the draft.
This work was partially supported by the Kempner Institute for the Study of Natural and Artificial Intelligence, a Polymath Award from the Schmidt Sciences, the NIH grant number R35NS137336, and the Department of Defense MURI program under ARO grant W911NF-231-0277.
\end{ack}

\bibliography{main, main_2}
\bibliographystyle{unsrtnat}

\appendix

\input{A1_Preliminaries}
\input{A2_Proofs}
\input{A3_Experiments}
\end{document}

%% file: A1_Preliminaries.tex
\section{Lie algebra for proofs}
\label{app:LieAlgebra}
In this section, we introduce further mathematical preliminaries
as necessary to provide self-contained proofs of our theorems.
We omit rigorous proofs for these observations,
and interested readers are directed to related textbooks and research papers \cite{iserles2000lie,lee2003smooth, petrogradsky2007wreath, tu2008manifold, rotman2009introduction, hall2013lie, robbin2022introduction,liu2022transformers}.
We open the section with the Lie group and Lie group action to motivate our Lie-algebraic approach.
\subsection{Lie groups and Lie algebras}
\label{app:subsubsec_Lieintro}
A set $G$ is a semigroup when $G$ is closed under an associative binary operation $\cdot: G \times G \to G$, usually called a "multiplication" operation.
If $\exists e \in G$ such that $e \cdot g = g \cdot e = g$ for $\forall g \in G$, $G$ is a monoid.
Moreover, $G$ is a group if for $\forall g \in G$ there exists an element $g^{-1} \in G$ such that $g^{-1} \cdot g = g \cdot g^{-1} = e$.
\paragraph{Lie group and Lie group action}
A Lie group $G$ is a smooth manifold equipped with a group structure under smooth multiplication.
For $x,y \in G$ let us denote the left translation $L_x(y) = xy$.
From the smoothness of the multiplication,
$L_x: G \to G$ is a diffeomorphism of the underlying manifold $\mcal{G}=G$.
In other words, translation can be thought of $G$ \emph{acting} on $\mcal{G}$.
The notion of Lie group \emph{action} generally follows:
\begin{definition}[Lie group action]
\label{app:def_LieGroupAction}
    Given a Lie group $G$ and a smooth manifold $\mcal{M}$,
    a (left) Lie group action is a smooth map
    $\Gamma: G \times \mcal{M} \to \mcal{M}$ that satisfies
    $\Gamma(e,h) = h$ and $\Gamma(xy,h) = \Gamma(x, \Gamma(y,h))$
    for an identity $e \in G$, any two elements $x, y \in G$,
    and a point on manifold $h \in \mcal{M}$.
\end{definition}
Following, we define the \emph{orbit} of $h$ denoted as $\text{Orb}_G(h) := \{\Gamma(g,h) \mid g \in G\}$.
\emph{Stabilizer}, or an isotropy subgroup, is a subgroup of $G$ where $\text{Stab}_G(h):= \{g\in G \mid \Gamma(g,h) = h \}$.

While Def.~\ref{app:def_LieGroupAction} straightforwardly describes how $\Gamma$ operates on $\mcal{M}$,
it is often advantageous to consider $\Gamma$ as inducing a diffeomorphism on $\mcal{M}$.
We can naturally introduce a Lie group homomorphism
$\Pi: G \to \text{Diff}(\mcal{M})$ such that $\Pi(g)=\Gamma(g,\cdot)$,
where $\text{Diff}(\mcal{M})$ denotes collection of diffeomorphisms of $\mcal{M}$.
\paragraph{Lie bracket and Lie algebra}
Now one may consider a smooth vector field $V: G \to TG$
and their collection $\fr{X}(G)$.
Given the differential of the left translation at point $y$,
\begin{equation*}
    (dL_x)_y: T_yG \to T_{L_x(y)}G = T_{xy}G,
\end{equation*}
the pushforward $(L_x)_*V \in \fr{X}(G)$ is well-defined \cite{tu2008manifold,lee2003smooth}.
Then, $V$ is said to be left-invariant when $(L_x)_*V = V$.
In other words, a left-invariant vector field $V$
satisfies $\bigl((L_x)_*V\bigr)_{xy} = (dL_x)_yV_y = V_{xy}$ for $\forall x, y \in G$
and completely determined by its evaluation at identity, as $V_x = V_{xe} = (dL_x)_e V_e$.
Thus, a collection of left-invariant vector fields, $\fr{X}_L(G)$, has a one-to-one correspondence with $T_e G$.

As a tangent vector is defined by how it differentiates smooth functions,
a smooth vector field on $\mcal{M}$
is commonly identified as the derivation $C^{\infty}(\mcal{M}) \to C^{\infty}(\mcal{M})$.
Following Leibniz rule,
the Lie bracket naturally arises to satisfy the derivation property \cite{tu2008manifold}.
For $f, g \in C^{\infty}(\mcal{M})$,
\begin{align*}
    XY(fg) &= X((Yf)g + fYg) \\
    &= (XYf)g + (Yf)(Xg) + (Xf)(Yg) + f(XYg) \\
    &\neq (XYf)g + f(XYg).
\end{align*}
Observe the extra term $(Yf)(Xg) + (Xf)(Yg)$ obstructs the derivation property.
One can subtract $YX(fg)$ to cancel out the extra term,
and naturally $(XY-YX)$ is again a derivation of $C^{\infty}(\mcal{M})$.
\begin{definition}[Lie bracket of vector fields]
    Given two smooth vector fields $X, Y: M \to TM$ on an open subset $M$ of a smooth manifold $\mcal{M}$,
    their Lie bracket is defined as following:
    \begin{equation}
    \label{app:eq_LieBracketDef}
        [X,Y] f = (XY - YX)f = X(Yf) - Y(Xf), \quad \forall f \in C^\infty(M).
    \end{equation}
\end{definition}
$\fr{X}_L(G)$ is closed under the Lie bracket.
Now we are ready to associate the Lie algebra $\fr{g}$ to the Lie group $G$:
\begin{definition}[Lie algebra associated to Lie group]
\label{app:LieAlgebra_Tangent}
    For a Lie group $G$, we identify the vector space of Lie algebra $\fr{g}:=T_eG$.
    For any $X \in \fr{g}$, assign a unique $X^L \in \fr{X}_L(G)$ that satisfies $X^L(y) = (dL_y)_eX$ for $y \in G$.
    We define the Lie bracket operation on $\fr{g}$ such that for $X,Y \in \fr{g}$,
    \begin{equation*}
        [X,Y] := [X^L,Y^L](e).
    \end{equation*}
    Then a map $\pi^L:\fr{g} \to \fr{X}_L(G)$, $\pi^L(X) = X^L$ is a Lie algebra isomorphism.
\end{definition}
Naturally, Lie algebra $\fr{g}$ can also be identified as a collection of the left-invariant, first-order differential operators on $G$.
Note the Lie bracket operation depends on the identification: with $\fr{g} \cong T_eG$, $[X,Y] \in T_eG$.
On the other hand, identifying $\fr{g} \cong \fr{X}_L(G)$ produces $[X,Y] \in \fr{X}_LG$.
The identification should be clear from the context.

We can analogously define the right translation on Lie group $R_x(y) = yx$.
The differential of the right translation at point y gives $(dR_x)_y: T_yG \to T_{R_x(y)}G = T_{yx}G$,
and we denote the collection of right-invariant vector fields as $\fr{X}_R(G)$.
Notationally, one can prove that $[X^L, Y^L](e) = -[X^R, Y^R](e)$ \cite{lee2003smooth}.
In that, $\pi^R: \fr{g} \to \fr{X}_RG$, $\pi^R(X) = X^R$ swaps the Lie bracket sign.

\paragraph{Bridge to geometry}
As a concrete example, consider the open set $M$ in Eq.~\ref{app:eq_LieBracketDef} with local coordinate charts $x=(x^1, \cdots, x^n)$
around point $p \in M$.
Then we can rewrite $X \in \fr{X}(M)$ in a coordinate expression:
\begin{equation*}
    X = \sum_{a=1}^n X^a \partial_a,
    \quad
    X^a = X(x^a) \in C^\infty(M),
\end{equation*}
and similarly for $Y \in \fr{X}(M)$.
Given the local coordinate, Lie brackets get a tangible interpretation.
One can exhaust the vector field Lie brackets with coordinate,
\begin{align*}
    [X,Y] = \sum_{c=1}^n [X,Y]^c \partial_c,
    \quad
    [X,Y]^c = X(Y^c) - Y(X^c),
\end{align*}
where $[X,Y]^c$ is again a coordinate function of $[X,Y] \in \fr{X}(M)$.
Evaluated at $p$, $[X,Y]^c(p) = X_p(Y^c) - Y_p(X^c)$.
Then from the differential it is transparent:
\begin{equation}
\label{app:eq_LieBracket_Directional}
    [X,Y]^c(p) = X_p(Y^c) - Y_p(X^c) = (dY^c)_p(X_p) - (dX^c)_p(Y_p).
\end{equation}
One may interpret above as a directional derivative of a smooth coordinate function
along the tangent vector direction \cite{lee2003smooth}.

\subsection{Representation and infinitesimal action}
\label{app:subsubsec_repprelim}
\paragraph{Lie group representation}
The abstract notion provided at App.~\ref{app:subsubsec_Lieintro} is often easier to work with a tangible object,
and matrix is one of such.
As $g, g^{-1} \in G$, let us consider the collection of any invertible matrices in $\bb{R}^{n \times n}$.
Over the real field, we name the collection as the general linear group and denotes as $\G{GL}{n;\bb{R}}$.
Matrix Lie group is defined as a closed subgroup of $\G{GL}{n;\bb{R}}$,
and many Lie groups in application arise as one.

These are the case of \emph{Lie group representations} \cite{lee2003smooth, hall2013lie}.
Given a real vector space $V$,
a representation of Lie group $G$ is a Lie group homomorphism $\Pi: G \to \G{GL}{V}$.
When $\dim V = n$, we can identify $\G{GL}{V}$ with $\G{GL}{n;\bb{R}}$ via basis choice.
Recall Def.~\ref{app:def_LieGroupAction} that representation is just a linear Lie group action of $G$ on $V$.
\paragraph{Lie algebra representation}
With Def.~\ref{app:LieAlgebra_Tangent} we can think of the corresponding \emph{Lie algebra representations}.
For a Lie group $G$ associated with $\fr{g}$,
consider $X \in \fr{g} \cong T_eG$ and corresponding $X^L \in \fr{X}_L(G)$.
Let $c_X: \bb{R} \to G$ be the integral curve of $X^L$ with $c_X(0) = e \in G$, starting at the identity.
Then we can construct the exponential map,
which is a canonical smooth map from the Lie algebra to its integrating group \cite{lee2003smooth}:
\begin{equation*}
    \exp: \fr{g} \to G, \quad \exp(X):= c_X(1),
\end{equation*}
and we write $\exp(tX) = c_X(t)$ so that $\frac{d}{dt}\vert_{t=0} \exp(tX) = X$.
Notice that $X$ corresponds to the initial velocity of $c_X(t)$.

One concrete example is the matrix Lie algebra $\fr{gl}(n,\bb{R})$, which is a collection of any matrices in $\bb{R}^{n \times n}$.
With the Lie group representation:
\begin{equation*}
    \Pi: G \to \G{GL}{n;\bb{R}}, \quad \Pi_*: T_eG \to T_{\Pi(e)}\G{GL}{n;\bb{R}} = T_{I_n}\G{GL}{n;\bb{R}},
\end{equation*}
where we identified $\G{GL}{n;\bb{R}}$-associated Lie algebra
$\fr{gl}(n,\bb{R}) \cong T_{I_n}\G{GL}{n;\bb{R}}$.
Then the Lie algebra representation is naturally induced:
\begin{equation*}
    \pi:= (d \Pi)_e: \fr{g} \to \fr{gl}(n,\bb{R}),
\end{equation*}
in that, $\pi$ is the differential of $\Pi$ at the identity.
For the matrix Lie algebra, exponential map is the matrix exponential $\text{expm}$:
\begin{equation}
\label{app:eq_matrixLie_exponential_action}
    \Pi(\exp(X)) = \text{expm} (\pi(X)), \: \pi(X) = \frac{d}{dt}\bigg\vert_{t=0}\Pi(\exp(tX)).
\end{equation}
We extend the differential of representation to the differential of action,
or \emph{infinitesimal} action, in the following paragraph.
\paragraph{Fundamental vector field}
Consider the Lie group $G$ acts on itself by left multiplication.
For $h \in G$ and $X \in \fr{g}$, let us define a vector field $X^\sharp$:
\begin{equation*}
    X^\sharp(h) = \frac{d}{dt} \bigg \vert_{t=0} \exp(-tX)h = -(dR_h)_eX \in T_hG.
\end{equation*}
Observe that $X^\sharp(h)$ recovers familiar ODE $\dot{h} = Xh$
up to sign change, when $G \subseteq \G{GL}{n,\bb{R}}$.

We can extend above to an arbitrary smooth finite manifold $\mcal{M}$,
replacing canonical left action on $G$ to a chosen left Lie group action $\Gamma: G \times \mcal{M} \to \mcal{M}$.
For each $X \in \fr{g}$ and $h \in \mcal{M}$, we define the fundamental vector field $X^\sharp \in \fr{X}(\mcal{M})$:
\begin{equation}
\label{app:eq_fundamental_vector_field}
    X^\sharp (h) := \frac{d}{dt}\bigg\vert_{t=0}\Gamma(\exp(-tX), h) \in T_h\mcal{M},
\end{equation}
then the map $\gamma: \fr{g} \to \fr{X}(\mcal{M})$,
$\gamma(X)= X^\sharp$ is a Lie algebra homomorphism \cite{iserles2000lie, robbin2022introduction}.
Therefore $\gamma$ is an \emph{infinitesimal} Lie group action, and considered as Lie algebra action on $\mcal{M}$.
\begin{remark}
\label{app:rem_Bracket_Sign}
Many text writes $X^\sharp$ in a \emph{right-invariant} vector field form \cite{robbin2022introduction}:
\begin{equation}
    X^\sharp (h) := \frac{d}{dt}\bigg\vert_{t=0}\Gamma(\exp(tX), h) \in T_h\mcal{M},
\end{equation}
and the map $\gamma(X)= X^\sharp$ is a Lie algebra \emph{anti}homomorphism \cite{iserles2000lie, robbin2022introduction}.
Under this definition, one would observe that $[\pi(X), \pi(Y)] = \pi([X,Y])$, a matrix commutator,
aligns with $[X^\sharp, Y^\sharp] = -[X,Y]^\sharp$ up to sign change.
In practice, the sign can be absorbed and need not be worried.
Some literature swap the Lie bracket direction to handle the sign change,
but this is just a choice of convention and does not obstruct any fundamental logic.
\end{remark}
\paragraph{Bridge to homogeneous ODE}
\label{app:bridge_to_hom_LPV}
As observed above, when a Lie group $G \subseteq \G{GL}{n,\bb{R}}$ left-acts on a smooth manifold $\mcal{M} \cong \bb{R}^n$,
the fundamental vector field recovers homogeneous ODE.
For a point $h \in \mcal{M}$ and $\mbf{A} \in \fr{g} \subseteq \fr{gl}(n,\bb{R})$, we select $\Gamma$ to be a left multiplication.
Then obviously:
\begin{equation*}
    \dot{h} = \frac{d}{ds}\bigg\vert_{s=0}\Gamma(\exp(s\mbf{A}), h) = \mbf{A}h,
\end{equation*}
which corresponds to the linear controlled homogeneous ODE, $\dot{h}(t) = \mbf{A}(x(t))h(t)$, given a fixed input $x(t)$.

Critically, when $G$ acts on itself, we can recover the state-transition matrix $\Phi$.
Recall Def.~\ref{def:Phi}, $\Phi$ satisfies:
\begin{equation}
\label{app:eq_homogeneous_matrix_state}
    \dot{\Phi}(t,0) = \mbf{A}(x(t))\Phi(t,0), \quad \Phi(0,0) = I_n,
\end{equation}
then Eq.~\ref{app:eq_homogeneous_matrix_state} forms a homogeneous ODE system with $\Phi$ being its latent state.
This observation is very useful:
it is immediate that for any admissible input, $\Phi(t,0) \in G$.
In a sense, one may understand Eq.~\ref{app:eq_homogeneous_matrix_state}
as tracking \emph{every} possible state evolution, rather than a single state.
This is a conventional perspective in geometric control theory,
commonly referred as control theory on Lie groups \cite{jurdjevic1972control, iserles2008magnus}.

\subsection{Lie algebra extension}
\label{app:subsubsec_extensionprelim}
Analogous to the notion of the group extension, Lie algebra extension formalizes \emph{gluing} two Lie algebras together.
Here we introduce the notion of Lie algebra extension \cite{weibel1994introduction,rotman2009introduction, PhysicsExtension, beckett2022symplectic}.
\begin{definition}[Lie algebra extension]
    \label{app:def_LieAlgebraExtension}
    The Lie algebra $\tilde{\fr{h}}$ is an extension of base Lie algebra $\fr{h}$ by (finite-dimensional) kernel Lie algebra $\fr{a}$ if there exists a short exact sequence of Lie algebras:
    \begin{equation}
    \label{app:eq_LieAlgebraExtension_shortsequence}
        0 \to \mathfrak{a} \xrightarrow{i} \tilde{\mathfrak{h}} \xrightarrow{\theta} \mathfrak{h} \to 0.
    \end{equation}
\end{definition}
Redundantly, $i: \fr{a} \to \tilde{\fr{h}}$ is a Lie algebra monomorphism,
while $\theta: \tilde{\fr{h}} \to \fr{h}$ is a Lie algebra epimorphism.
The short exact sequence satisfies $\text{im}\; i = \ker \theta$,
thus $i(\fr{a})$ is an ideal in $\tilde{\fr{h}}$ and $\tilde{\fr{h}}/i(\fr{a}) \cong \fr{h}$.
Following common convention, we identify $\fr{a}$ with $i(\fr{a})$ without confusion.
\begin{remark}[Tower of abelian extension]
\label{app:rem_solvable_k_tower}
A solvable Lie algebra with derived series $\fr{g} = \fr{g}^{(0)} \supset \fr{g}^{(1)} \supset \cdots \supset \fr{g}^{(k)}=\{0\}$
is an extension:
\begin{equation*}
    0 \to \fr{g}^{(n)} \to \fr{g} \to \fr{g} / \fr{g}^{(n)} \to 0.
\end{equation*}
When $n = k-1$, $\fr{g}^{(k-1)}$ is an abelian Lie (sub-)algebra and
$\fr{g} / \fr{g}^{(k-1)}$ is a solvable Lie algebra of derived length $k-1$.
In iteration, we recover a $k$-length \emph{tower} of abelian Lie algebra extension.
\paragraph{Split extension}
Understanding how $\fr{h}$ acts on $\fr{a}$ is critical in characterizing $\tilde{\fr{h}}$.
We begin with a simple case.
\end{remark}
\begin{definition}[Split extension]
    In Eq.~\ref{app:eq_LieAlgebraExtension_shortsequence}
    consider a linear section $s: \fr{h} \to \tilde{\fr{h}}$ that satisfies $\theta \circ s = 1_\fr{h}$, an identity map.
    If $s$ is a Lie algebra homomorphism, Eq.~\ref{app:eq_LieAlgebraExtension_shortsequence} is \emph{split}.
\end{definition}
If Lie algebra extension splits, then
we say $\tilde{\fr{h}}$ is a \emph{semidirect product} (or sum) of $\fr{h}$ by $\fr{a}$
and we denote as $\tilde{\fr{h}} = \fr{h} \ltimes \fr{a}$
(or $\fr{a} \rtimes \fr{h}$) \cite{rotman2009introduction}.\footnote{Note
that the convention of sign direction is not accordant across references \cite{petrogradsky2007wreath, rotman2009introduction}.
here we emphasize that $\fr{h}$ acts on $\fr{a}$.}

Consider $X,Y \in \fr{h}$ and $u,v \in \fr{a}$.
$\tilde{\fr{h}}$ acts on $\fr{a}$ by derivation, as $\fr{a}$ is an ideal in $\tilde{\fr{h}}$.
As the linear section $s$ is a Lie algebra homomorphism,
$s(X) \in \tilde{\fr{h}}$ naturally acts on $\fr{a}$ by derivation too.
Thus, the induced action of $\fr{h}$ on $\fr{a}$ is straightforward here:
\begin{equation*}
    \rho: \fr{h} \to \text{Der}(\fr{a}),
    \quad
    \rho(X)v = [s(X),v],
\end{equation*}
and Lie bracket on split extension follows:
\begin{align}
\label{app:eq_split_bracket}
    [(X, u), (Y, v)]_{\tilde{\fr{h}}} &= ([X,Y]_\fr{h}, [u,v]_\fr{a} + \rho(X)v - \rho(Y)u).
\end{align}
One special case of split extension is the trivial extension where $[(X,u),(Y,v)] = ([X,Y],[u,v])$,
and we name it as a \emph{direct product} (or sum) of $\fr{h}$ and $\fr{a}$.

\paragraph{Bridge to inhomogeneous ODE}
\label{app:bridge_to_general_LPV}
Similar to App.~\ref{app:bridge_to_hom_LPV}, affine vector field naturally arises from the semidirect product.
Consider an inhomogeneous ODE $\dot{h}(t) = \mbf{A}(x(t))h(t) + \mbf{b}(x(t))$.
As noted often \cite{krener1975bilinear, krener1977decomposition, iserles2000lie},
one can augment the state to \emph{homogenize} the ODE:
\begin{equation*}
    \bar{h}(t) = \begin{pmatrix}
    h(t) \\
    1
\end{pmatrix},
\quad
\dot{\bar{h}}(t)
= \begin{pmatrix}
        \mbf{A}(x(t)) & \mbf{b}(x(t)) \\
        0 & 0
    \end{pmatrix}
    \:
    \bar{h}(t).
\end{equation*}
Then it is immediate that the generator class is affine.
Consider a finite-dimensional Lie group of affine transformation, $\G{Aff}{n, \bb{R}}$.
Its associated Lie algebra $\fr{aff}(n, \bb{R})$ is the canonical example of the split extension:
\begin{equation*}
    0 \to \bb{R}^n \to \fr{aff}(n, \bb{R}) \to \fr{gl}(n,\bb{R}) \to 0,
\end{equation*}
such that $(A,b) \in \fr{gl}(n,\bb{R}) \ltimes \bb{R}^n$.
With a standard matrix embedding,
\begin{equation*}
    (A,b) \longmapsto \begin{pmatrix}
        A & b \\
        0 & 0
    \end{pmatrix},
\end{equation*}
which corresponds to the generator class of the homogenized ODE above.

Let us confirm that realized SSM respects the split extension bracket rule in Eq.~\ref{app:eq_split_bracket}.
From the matrix representation above, one can simply check their matrix commutator.
With two fixed inputs $x$, $y$:
\begin{equation*}
    \bar{\mbf{A}}(x) = \begin{pmatrix}
        \mbf{A}(x) & \mbf{b}(x) \\
        0 & 0
    \end{pmatrix},
    \quad
    [\bar{\mbf{A}}(x), \bar{\mbf{A}}(y)]
    =
    \begin{pmatrix}
        [\mbf{A}(x), \mbf{A}(y)]
        &
        \mbf{A}(x)\mbf{b}(y) - \mbf{A}(y)\mbf{b}(x) \\
        0 & 0
    \end{pmatrix},
\end{equation*}
especially when $\text{Lie}{\{\mbf{A}(x)\}}$ is abelian, $[\mbf{A}(x), \mbf{A}(y)] = 0$ and only the top right entry is nonzero.

Now consider vector field representation. For the brevity, let us keep $\text{Lie}{\{\mbf{A}(x)\}}$ be abelian.
Recall Eq.~\ref{app:eq_LieBracket_Directional}:
\begin{align}
X(h) &= \mbf{A}(x)h + \mbf{b}(x), \quad Y(h) = \mbf{A}(y)h + \mbf{b}(y) \nonumber \\
    [X,Y]^c &= \sum_{a=1}^{n}
    \left(
    \mbf{b}^a(x)\mbf{A}^{ca}(y)
    - \mbf{b}^a(y)\mbf{A}^{ca}(x)
    \right) \nonumber \\ 
    [X,Y] &= \mbf{A}(y)\mbf{b}(x) - \mbf{A}(x)\mbf{b}(y),\label{app:eq_LPV_affinebracket}
\end{align}
with sign change noted in Rem.~\ref{app:rem_Bracket_Sign}.

Taken together with App.~\ref{app:bridge_to_hom_LPV},
switching the vector state space
with the corresponding Lie group, or flow space,
is often referred as \emph{lifting} \cite{krener1977decomposition}.
Except for minor details in implementation, lifting does not lose any generality
and we can \emph{project} back to the state space \cite{iserles2000lie, iserles2008magnus}.
\paragraph{Central extension and beyond}
If all extension belongs to the split extension class, our discussion would be much easier.
However note that the linear section $s: \fr{h} \to \tilde{\fr{h}}$ needs not be a Lie algebra homomorphism.
Among non-split extension, central extension is another key class of the Lie algebra extension.
\begin{definition}[Central extension]
    In Eq.~\ref{app:eq_LieAlgebraExtension_shortsequence} let $\fr{a}$ be abelian.
    When $i(\fr{a})$ is in the center of $\tilde{\fr{h}}$ such that $[\tilde{X}, u] = 0$ for any $\tilde{X} \in \tilde{\fr{h}}$
    and $u \in i(\fr{a})$, we call it as central extension.
\end{definition}
As $s$ is not necessarily Lie algebra homomorphism, we introduce a correction term, called \emph{2-cocycle} \cite{weibel1994introduction}:
\begin{equation}
\label{app:eq_2cocycle}
    \omega: \fr{h} \times \fr{h} \to \fr{a}, \quad
    \omega(X,Y) := [s(X),s(Y)] - s([X,Y]_\fr{h}).
\end{equation}
Recall Eq.~\ref{app:eq_split_bracket}.
As $\fr{a}$ belongs to the center, the induced action $\rho: \fr{h} \to \text{Der}(\fr{a})$ is trivial.
Thus, in central extension, the bracket rule simplifies to:
\begin{equation*}
\label{app:eq_central_bracket}
[(X,u),(Y,v)] = ([X,Y], \omega(X,Y)).
\end{equation*}
When $s$ is Lie algebra homomorphism, central extension breaks down to trivial extension.

Taken together, the Lie bracket on general Lie algebra extension follows:
\begin{equation*}
    [(X, u), (Y, v)]_{\tilde{\fr{h}}} = [[X,Y]_\fr{h}, [u,v]_\fr{a} + \rho(X)v - \rho(Y)u + \omega(X,Y)].
\end{equation*}
Characterization of general Lie algebra extension is nontrivial.
At least under Lie group setting, the Kaloujnine-Krasner universal embedding theorem states that
any Lie group extensions embeds into the \emph{wreath} product of Lie group \cite{wells1976some,deval2024universal}.
Fortunately, following studies have extended the universal embedding theorem
to Lie algebra \cite{petrogradsky2007wreath, deval2024universal}.
\section{Control theory basics}
\label{app:LPV}
\subsection{Abstract control systems}
\label{app:control_detail}
Here we discuss abstract notions of controlled dynamical systems \cite{iserles2000lie}.
On a real smooth finite-dimensional manifold $\mcal{M}$, we consider a time-dependent tangent vector field:
\begin{equation*}
    F: [0,\tau] \times \mcal{M} \times \mcal{X} \to T\mcal{M},
\end{equation*}
which is locally Lipschitz in $h \in \mcal{M}$ and measurable in time $t \in [0,\tau]$.  
In this context, we call $\mcal{M}$ the \emph{state space} and $\mcal{X}$ the \emph{input space}.
Then, with a fixed input path $x: [0,T] \to \mcal{X}$ we obtain the controlled ODE:
\begin{equation}
    \label{eq:abstractODE}
    \dot{h}(t) = F_x(t,h(t)), \; F_x(t,h(t)) := F(t,h(t),x(t)),
\end{equation}
assuming the path is locally bounded and integrable \cite{iserles2000lie}.
We call a tuple $S = (\mcal{M}, F,\mcal{X})$ a \emph{controlled dynamical system}.
Lastly, we say the system is initialized when $h(0)$ is fixed.
Restricting $\mcal{M}$ to be on a Euclidean state space realizes SSM.
\subsection{Simulation}
\label{app:sim_detail}
Here we formally introduce the notion of simulation \cite{sussmann1976existence,hermann1977nonlinear, krener1977decomposition}.
\begin{definition}[Simulation]
\label{def:SSM_sim_equivalence}
    Let two initialized $S_0$, $S_1$ share an input space.
    Consider a smooth map $\varphi: \mcal{M}_0 \to \mcal{M}_1$
    such that $\varphi(h_0(t)) = h_1(t)$ for any admissible inputs.
    If $\varphi$ is a surjective submersion, we say $S_0$ \emph{simulates} $S_1$.
    Moreover, if $\varphi$ is (locally) diffeomorphic, we say $S_0$ is equivalent to $S_1$.
\end{definition}
Note that if $S_0$ simulates $S_1$ and $\dim S_0 = \dim S_0$,
then $\varphi$ is diffeomorphic.
Therefore, equivalence implies simulation.

For SSMs, $\varphi$ is restricted to be full-rank linear map.
Following, diffeomorphic $\varphi$ is invertible linear map.
Arbitrary smooth $\varphi$ would typically leave the SSM model class \cite{krener1977decomposition,toth2010modeling}.
When restricted to SSMs, we define approximation error correspondingly.
\begin{definition}[SSM approximation error on horizon]
\label{def:SSM_sim_error}
    Consider initialized $\mbf{S}_0$, $\mbf{S}_1$ with $n_0 = \dim \mbf{S}_0 \ge \dim \mbf{S}_1 = n_1$.
    Let $\mcal{X}_{ad}$ be shared admissible input path space, with
    \begin{equation*}
        U := \{x \in \mcal{X}_{ad} \mid x: [0,T] \to \mcal{X}, \|x\| \le M\}
    \end{equation*}
    be measure bounded paths over chosen interval $[0,T]$.
    Then, we define approximation error:
    \begin{equation}
    \label{eq:SSM_sim_error}
    \Delta_h := \inf_{\mbf{P} \in \bb{R}^{n_1 \times n_0}} \sup_{x \in U} \bigl\| \mbf{P}h_0(T) - h_1(T) \bigr\|,
    \end{equation}
    where $\mbf{P}$ is a full-rank linear map.
\end{definition}
Note that $\mbf{P}$ can be replaced with invertible linear map $\mbf{T}$
by state augmentation on $\mbf{S}_1$, which leads to the standard state-space equivalence relation
in linear parameter-varying (LPV) system theory \cite{toth2010modeling}.
\subsection{Minimality}
\label{app:min_detail}
To understand whether one system simulates the other, we need a notion of minimality.
With a concept of Lie group action, minimality comes in a straightforward manner \cite{jurdjevic1972control, krener1977decomposition}.

Consider $5$-tuple $(\mcal{M},F,\mcal{X}_{ad},\mcal{Y},g)$
and its evolution operator $\Psi$.
Consider a collection $S = \{\Psi_x\}$ for $x \in \mcal{X}_{ad}$,
then one can notice $S \subset \text{Diff}(\mcal{M})$ is a semigroup.
Although any $\Psi_x(t,0)$ represent as an invertible matrix,
the existence of admissible input for $\Psi_{x'} = \Psi_x^{-1}$ is not guaranteed.
With completion, we denote a group $G$ be the smallest subgroup of $\text{Diff}(\mcal{M})$ containing $S$.

For a chosen point $h \in \mcal{M}$, consider a collection $S(h) = \{\Psi(h): \Psi \in S\}$ and $G(h) = \{\Psi(h): \Psi \in G\}$.
It is akin to the orbit of Lie group action:
intuitively, $S(h)$ is the set of accessible points on $\mcal{M}$ under admissible inputs.
We say the system is (weakly) \emph{controllable} at point $h$ when $S(h) = \mcal{M}$.
When satisfied, the condition implies $G(h) = \mcal{M}$.

Now consider the output map.
Given any two points $h_1, h_2 \in \mcal{M}$, denote $y_1(t) = g(\Psi(t,0)(h_1))$ and $y_2(t)$ correspondingly.
Then $h_1$ and $h_2$ are said to be indistinguishable when $y_1(t) = y_2(t)$ for any admissible inputs.
The system is \emph{observable} when $h_1$ and $h_2$ being indistinguishable implies $h_1 = h_2$.
Collectively, we say the system $(\mcal{M},F,\mcal{X}_{ad},\mcal{Y},g)$ is \emph{minimal}
when both (weakly) controllable and observable \cite{krener1977decomposition}.
\begin{remark}
    Often, the attention is restricted to open subset $U \subseteq \mcal{M}$ however small.
    Without global Lie group action and orbits, one can characterize the system with Lie algebras.
    For example, weak controllability is replaced by \emph{Lie algebra accessibility rank condition} \cite{sontag2013mathematical}.
\end{remark}
One may easily notice the notion of minimality in LTI system is a special case of the above.
We can similarly induce notion of minimality for SSMs,
commonly studied as a (quasi-)LPV system \cite{toth2010modeling}.
\subsection{Magnus series and path signature}
\label{app:CDE}
The Magnus series provides Lie-algebraic expansion of the state-transition matrix.
As various evolution operators in physical applications are indeed a integral curve on a Lie group,
the Magnus expansion (or its truncation) serves as a useful tool in numerical approximation problems \cite{iserles2000lie}.
Given a system $\mbf{S}$, we explicitly write down a few first terms:
\begin{align*}
    \Phi(t,0) &= \exp \Omega(t),
    \quad
    \Omega(t) = \sum \Omega_i(t)\\
    \Omega_1(t) &= \int_0^t \mbf{A}(t_1) \; dt_1 \\
    \Omega_2(t) &= \frac{1}{2}\int_0^t \; dt_1 \int_0^{t_1} \; dt_2
    [\mbf{A}(t_1), \mbf{A}(t_2)] \\
    \Omega_3(t) &= \frac{1}{6}\int_0^t \; dt_1 \int_0^{t_1} \; dt_2 \int_0^{t_2} \; dt_3
    \Bigl(
    [\mbf{A}(t_1), [\mbf{A}(t_2), \mbf{A}(t_3)]] + [\mbf{A}(t_3), [\mbf{A}(t_2), \mbf{A}(t_1)]]
    \Bigr)
    \\
    \cdots
\end{align*}
Observe how the lower central series contributes to each order of the Magnus series.
Specifically, the $i$-th order Magnus term involves $(i-1)$ Lie bracket operations. In the lower central series, these elements lie in $\fr{g}^{i-1}$.
Consequently, if $\fr{g}$ is a class $c$ nilpotent Lie algebra, then the Magnus expansion of $\mbf{S}_\fr{g}$ truncates at order $c$ and all Magnus terms $\Omega_{> c}$ vanish. As a special case, abelian $\fr{g}$ is the class $1$ nilpotent Lie algebra, all commutators of $\mbf{S}_\fr{g}$ vanish and we have truncation at order $1$.

Recent progress in continuous-time neural network modeling leverages
rough path theory and path signature \cite{muca2024theoretical, walker2024log, walker2025structured}.
Path signature is deeply related to the Magnus series \cite{chevyrev2024multiplicative,chevyrev2025primer}.
For linear CDE, the path signature can be expressed as
a tensor exponential of the Magnus expansion \cite{friz2022unified}.
In other words, the Magnus expansion is a representation of \emph{log-signature} \cite{morrill2021neural, walker2024log}
when input variation is bounded.
Log-signature underscores the algebraic property of input path,
while the Magnus expansion prioritizes state transition.
The study of the Magnus series is relatively mature and often provides both theoretical and computational benefits
when the path signature admits the Magnus expansion \cite{friz2022unified}.

%% file: A2_Proofs.tex
\section{Proofs}
\label{app:proof}
\subsection{Proof of Lemma~\ref{lem:sim_impossible}}
\label{app:sim_impossible_proof}
\begin{proof}
    Algebraically, it is straightforward.
    Any image of abelian Lie algebra under Lie algebra homomorphism is abelian.
    There is no representation of abelian Lie algebra that depends on path order.
    
    Constructively, general SSM $\mbf{S}$ and abelian SSM $\hbf{S}$ are linear ODEs.
    It suffices to find the existence of nonsingular linear state transformation map $\mbf{T}$
    that satisfies $\mbf{T}\hat{h}(t) = h(t)$ for any time $t$ and admissible inputs
    (Sec.~\ref{par:sim_min_main} and App.~\ref{app:sim_detail}).

    Let us denote $(\Phi, \hat{\Phi})$ be the state-transition matrix of $(\mbf{S}, \hbf{S})$.
    It is clear that $\mbf{T}\hat{h}(t) = h(t)$ implies $\mbf{T}\hat{\Phi}(t,0) = \Phi(t,0)\mbf{T}$.
    Consider an admissible input path $x: [0,T] \to \mcal{X}$,
    and its measure-preserving permutation path $x'(t)= x(\sigma(t))$
    where $\sigma:[0,T] \to [0,T]$.
    Assume $x'$ is also admissible.
    Then, being abelian:
    \begin{equation*}
        \hat{\Phi}_{x}(T,0) = \hat{\Phi}_{x'}(T,0),
        \quad
        \mbf{T}(\hat{\Phi}_{x}-\hat{\Phi}_{x'})\mbf{T}^{-1} = \mbf{0}.
    \end{equation*}
    However, for non-abelian $\mbf{S}$, chronological exponential map depends on the path order and $\Phi_{x}(T,0) \neq \Phi_{x'}(T,0)$,
    contradicting the existence of $\mbf{T}$.
    
    As noted in App.~\ref{app:sim_detail}, the proof generalizes to the case $m = \dim \hbf{S} \ge \dim \mbf{S} = n$.
    $\mbf{T}$ simply needs to satisfy:
    \begin{equation*}
        \mbf{T}\hat{h} = \begin{bmatrix}
            h \\
            *
        \end{bmatrix}
        \begin{matrix*}[l]
            \rbrace \; n \\
            \rbrace \; m-n,
        \end{matrix*}
    \end{equation*}
    for some variables $*$ to match dimension \cite{toth2010modeling}.
    The algebraic nature is intact, and the proof immediately follows.
    With respect to Eq.~\ref{eq:SSM_sim_error},
    truncating $*$-corresponding rows of $\mbf{T}$ recover full-rank matrix $\mbf{P}$.
\end{proof}
\subsection{Proof of Theorem~\ref{thm:sim_error}}
\label{app:sim_error_proof}
Here we divide the problem into abelian and restricted cases.
Notation follows App.~\ref{app:sim_impossible_proof}.
\begin{proof}[Abelian SSM]
    Consider general SSM $\mbf{S}$.
    Assume an input path $x$ and permuted $x'$ on window $[0,T]$ are both admissible.
    With small enough $T$, we can derive a loose lower bound of $\|\Phi_{x}(T) - \Phi_{x'}(T)\|$ with Magnus expansion in App.~\ref{app:CDE}:
    \begin{align*}
        \|\Phi_x(T) - \Phi_{x'}(T)\|
        &= \|\exp \Omega(T) - \exp \Omega'(T) \| \\
        & \ge \exp \Bigl(
        - \max
        \{ \|\Omega(T)\|, \| \Omega'(T) \| \}
        \Bigr)
        \| \Omega(T) - \Omega'(T) \| \\
        &= \lambda\| \Omega(T) - \Omega'(T) \|, \qquad \lambda \in (0,1].
    \end{align*}
    Recall Eq.~\ref{eq:omega_2},
    \begin{align*}
        \|\Omega_2(T) - \Omega_2'(T)\|
        &= 
        \biggl\|
        \frac{1}{4}\iint_{[0,T]^2}
        \Bigl(
        \text{sgn}(t_1 - t_2)
        -
        \text{sgn}(\sigma(t_1) - \sigma(t_2))
        \Bigr)
        [\mbf{A}(x(t_1)), \mbf{A}(x(t_2))]
        \, dt_1 dt_2
        \biggr\| \\
        &\le 
        \biggl \|
        \frac{1}{2} \iint_{[0,T]^2}
        \text{sgn}(t_1 - t_2)
        [\mbf{A}(x(t_1)), \mbf{A}(x(t_2))]
        \, dt_1 dt_2
        \biggr \| \\
        &= 2\|\Omega_2(T)\|,
    \end{align*}
    where equality holds when $\sigma(t)$ is a \emph{time-reversal} function.

    Time-reversal function simplifies Magnus series further.
    It is straightforward that $\Omega(T) - \Omega'(T) = \sum_{i=1} \Omega_{2n}(T)$ with $\Omega_{2n}(T) = -\Omega_{2n}'(T)$ and $\Omega_{2n+1}(T) = \Omega_{2n+1}'(T)$.
    More generally,
    $\Omega_2$ is the leading term in $\Omega_{\ge 2}(T)$
    without assuming that the time-reversal function is a permutation.
    Its tail $\Omega_{\ge 3}(T)$ shrinks at $\mathcal{O}(T^3)$.
    Then, we can derive the rest of the lower bound:
    \begin{equation}
    \label{app:eq_OmegaError_2nd}
        \| \Omega(T) - \Omega'(T) \|
        =
        2\|\Omega_{2n}(T)\|
        \ge 2c\|\Omega_2(T)\|, \qquad c \in (0,1].
    \end{equation}
    On the other hand, App.~\ref{app:sim_impossible_proof} implies abelian $\hbf{S}$ generates the same final state under path permutation.
    Thus, either $x$ or $x'$ should incur state mismatch:
    \begin{equation*}
        \max_{x,x'} \Delta_h
        \ge
        \lambda c \|\Omega_2(T)\|\|h_0\|,
    \end{equation*}
    provides lower bound of approximation error incurred on small window.
    \footnote{For anyone interested in output error,
    the minimality condition guarantees existence of some common suffix sequence
    that generates output mismatch from state mismatch.}
    
    When $T$ is large, we decompose the path by small enough time partitions, as noted in Eq.~\ref{eq:flow_composition}:
    \begin{equation}
        \Phi(T,0) = \Phi(T,t_k)\Phi(t_k,t_{k-1})\cdots\Phi(t_1,0).
    \end{equation}
    Note that Eq.~\ref{eq:flow_composition} admits simple rearrange of terms:
    \begin{align*}
        \Phi_x(t_2,t_1)\Phi_x(t_1,0) &- \Phi_{x'}(t_2,t_1)\Phi_{x'}(t_1,0)
        = \\
        &\Phi_x(t_2,t_1)\bigl(\Phi_x(t_1,0) - \Phi_{x'}(t_1,0)\bigr)
        +
        \bigl(\Phi_x(t_2,t_1) - \Phi_{x'}(t_2,t_1)\bigr)\Phi_{x'}(t_1,0).
    \end{align*}
    Therefore,
    \begin{equation}
        \Phi_x(T) - \Phi_{x'}(T) = \sum_{i=1}^k \Phi_x(T, t_{i+1})
        \Bigl(
        \Phi_x(t_i, t_{i-1}) - \Phi_{x'}(t_i, t_{i-1})
        \Bigr)
        \Phi_{x'}(t_{i-1},0),
    \end{equation}
    and with minimality condition, $x$ can be designed to accumulate local error over each small pieces.
\end{proof}
\begin{proof}[Restricted SSM]
    Let us construct admissible inputs on interval $[0,2T]$ by composing two $T$-length sequence\footnote{
    Choice of $2T$ is arbitrary and two sequences need not have the same length.
    }:
    \begin{equation*}
        x_1 = \text{prefix}_1 \oplus \mbf{xy},
        \quad
        x_1' = \text{prefix}_1 \oplus \mbf{yx},
        \quad
        x_2 = \text{prefix}_2 \oplus \mbf{xy},
        \quad
        x_2' = \text{prefix}_2 \oplus \mbf{yx}
    \end{equation*}
    with permutation map $\sigma$ on $(T, 2T]$.
    Here we abuse notation and denote $(\mbf{xy}, \mbf{yx})$ to emphasize permutation.
    
    Consider general SSM $\mbf{S}$.
    As $x_1$ and $x_1'$ shares prefix, $\Phi_{x_1}(T,0) = \Phi_{x_1'}(T,0)$.
    Shortly, we denote $\mbf{M} := \Phi_{x_1}(2T,T) - \Phi_{x_1'}(2T,T)$.
    By simple subtraction:
    \begin{align*}
        h_1(2T) - h_1'(2T)
        =
        \mbf{M}h_1(T)
        +
        \int_T^{2T}
        \Bigl(
        \Phi_{\mbf{xy}}(2T, \tau)\mbf{b}(\mbf{xy}(\tau))
        -
        \Phi_{\mbf{yx}}(2T, \tau)\mbf{b}(\mbf{yx}(\tau))
        \Bigr) d\tau.
    \end{align*}
    Similarly, we can obtain $h_2(2T) - h_2'(2T)$.
    Then, subtraction between terms cancels inhomogeneous part out:
    \begin{equation}
    \label{app:eq_subtraction_magic}
        \Bigl(
        h_1(2T) - h_1'(2T)
        \Bigr)
        -
        \Bigl(
        h_2(2T) - h_2'(2T)
        \Bigr)
        =
        \mbf{M}
        (h_1(T) - h_2(T)).
    \end{equation}
    On the other hand, for restricted $\hbf{S}$, $\mbf{M} = \mbf{0}$.
    Therefore, among these four inputs,
    \begin{align}
    \label{app:eq_inhomogeneous_error}
        \max_{x_1,x_1', x_2, x_2'} \Delta_h
        &\ge \frac{1}{4} \|\mbf{M}
        (h_1(T) - h_2(T))\|,
    \end{align}
    with magnitude of $\mbf{M}$ following Eq.~\ref{app:eq_OmegaError_2nd}.
    
    With minimality condition,
    $\text{prefix}_1$ and $\text{prefix}_2$ can be chosen such that $h_1(T) \neq h_2(T)$ for $\mbf{S}$.
    Up to bounded magnitude, we can make $h_1(T)$ and $h_2(T)$ be as far as possible.
    Long sequence behavior is analogous to the abelian case.
\end{proof}
\subsection{Proof of Lemma~\ref{lem:stackderivedlength}}
\label{app:stackderivedlength_proof}
We first show abelian $k$-layer SSM has up to $k$ derived length,
and naturally derives $2k$ for restricted SSMs.
To obtain notational simplicity, we choose vector field notation for this proof.
\begin{proof}[Abelian SSM]
    Consider abelian $k$-layer SSM with vector field
    $F(t,h,x) = (F_0, F_1, \ldots, F_{k-1})$ on a manifold $\mcal{M} \cong \bb{R}^n, \mcal{M}_i \cong \bb{R}^{n_i}$.
    Assume $(t,x)$ is fixed.
    For the layer index $i$ and state coordinate index $a$, we trivially extend the local vector field:
    \begin{equation}
    \label{app:eq_trivial_support_extension}
        h_i \in \mcal{M}_i,
        \quad
        \partial_{i,a} = \frac{\partial}{\partial h_i^a},
        \quad 
        \bar{F}_i(h) = \sum_{a=1}^{n_i} F_i^a (h_{\le i}) \partial_{i,a}
        = (\underbrace{0,\ldots,0}_{0:i-1}, F_i, \underbrace{0,\ldots,0}_{i+1:k-1}),
    \end{equation}
    such that the coefficient function $(\bar{F}_i)_j^c = 0$ for any $j \neq i$ and $\bar{F}_i^c = F_i^c$.
    Then the global vector field $F(h) = \sum_{i=0}^{k-1} \bar{F}_i(h)$.
    
    Now consider two fixed control inputs $x,y$ and corresponding vector fields $X,Y$
    such that $X(h) = F(h,x)$ and $Y(h) = F(h,y)$.
    Following Eq.~\ref{app:eq_trivial_support_extension}, let us denote:
    \begin{align*}
        X = \sum_{i=0}^{k-1} X_i,
        \quad
        X_i &= \sum_{a=1}^{n_i}X_i^a \partial_{i,a},
    \end{align*}
    and similarly for $Y$.

    With Eq.~\ref{app:eq_LieBracket_Directional} let us construct their Lie bracket $[X,Y] = \sum_{i,j}[X_i,Y_j]$.
    When $i=j$, trivially $[X_i,Y_i] = 0$ from abelian definition.
    For $i < j$, note that $X_i$ is independent of $h_j$ by construction, therefore:
    \begin{align}
        [X_i,Y_j]_i^c &= \sum_{a=1}^{n_i}X_i^a \partial_{i,a} (Y_j)_i^c
        - \sum_{b=1}^{n_j}Y_j^b \partial_{j,b} X_i^c = 0, \label{eq:zerocrossbracket} \\
        [X_i,Y_j]_j^c &= \sum_{a=1}^{n_i}X_i^a \partial_{i,a} Y_j^c
        - \sum_{b=1}^{n_j}Y_j^b \partial_{j,b} (X_i)_j^c
        = \sum_{a=1}^{n_i}X_i^a \partial_{i,a} Y_j^c, \label{eq:nonzerocrossbracket}
    \end{align}
    such that nonzero component of $[X_i,Y_j]$ only lives in the $j$-th layer.

    Eq.~\ref{eq:nonzerocrossbracket} provides a chain of ideals.
    Let us denote $\fr{g}$ be the Lie algebra generated by the collection of global vector fields $F$.
    We define its ideal $I_{l} := \{X \in \fr{g} \mid X(h) = \sum_{i \ge l} X_i(h)\}$ with zero component below layer $l$.
    Then:
    \begin{equation*}
        [X_i,Y_j] \in I_{\max (i,j)},
        \quad
        [X,Y] \in I_1,
        \quad
        \fr{g}^{(n)} \in I_n,
    \end{equation*}
    which allows us to construct a chain,
    \begin{equation}
    \label{eq:cascade_derived_series}
        \fr{g} = I_0 \supseteq I_1 \supseteq \cdots \supseteq I_{k-1} \supseteq I_k = \{0\},
    \end{equation}
    where quotients of ideals, $I_i / I_{i+1}$, are all abelian by construction.
    Eq.~\ref{eq:cascade_derived_series} is not necessarily the shortest sequence of abelian ideals,
    so abelian $k$-layer SSM has maximally $k$ derived length.
\end{proof}
\begin{proof}[Restricted SSM]
    It is trivial from Eq.~\ref{eq:cascade_derived_series}.
    Each $I_i / I_{i+1}$ is constructed to be restricted,
    which by themselves are solvable with derived length $2$.
    Again, the series is not necessarily the shortest derived series,
    and restricted $k$-layer SSM has derived length at most $2k$.
\end{proof}

\subsection{Proof of Theorem~\ref{thm:depth_extension}}
\label{app:sub_proof_depth_extension}
\begin{proof}
    Consider lifted Lie equation of $\mbf{S}_\fr{g}$ with a matrix Lie algebra $\fr{g} \subset \fr{gl}(V)$:
    \begin{equation}
    \label{app:eq_stack_Lie_eq}
        \dot{\mbf{G}}(t) = \mbf{A}_\fr{g}(x(t))\mbf{G}(t),
        \quad
        \mbf{G}(0) = I_V,
    \end{equation}
    where $\mbf{A}_{\fr{g}}(x(t)) \in \fr{g}$. Given the Lie algebra extension:
    \begin{equation}
    \label{app:eq_proofs_Lie_alg_extension_seq}
        0 \to \fr{a} \xrightarrow{i} \fr{g} \xrightarrow{\theta} \fr{h} \to 0,
    \end{equation}
    our interest is to decompose the integral curve $\mbf{G}(t)$ into curves integrating ideal $\fr{a}$ and quotient $\fr{h} \cong \fr{g} / \fr{a}$.
    
    Let $G$ be a matrix Lie group locally integrating $\fr{g}$ near neighborhood of identity. Again locally, let $A$ be the normal subgroup integrating the ideal $\fr{a}$, and $H$ for the quotient Lie algebra $\fr{h}$.
    We do not discuss global property of $(A, G, H)$, and shortly call them as local matrix Lie groups. 
    Restricted to sufficiently small neighborhoods,
    \begin{equation*}
        1 \to A \xrightarrow{\boldsymbol i} G \xrightarrow{\Theta} H \to 1,
    \end{equation*}
    where differentiation at the identity recovers Eq.~\ref{app:eq_proofs_Lie_alg_extension_seq}.
    We follow common notational convention and do not distinguish
    $A$ or $\fr{a}$ from $\boldsymbol i(A)$ or $i(\fr{a})$,
    respectively.
    
    For the quotient part, its Lie equation is straightforward.
    We often silence input variables without confusion for the clarify.
    With $\theta = d\Theta_e$:
    \begin{align}
        \mbf{H} &:= \Theta(\mbf{G}) \in H,
        \quad
        \bar{\mbf{A}} := \theta(\mbf{A}_\fr{g}) \in \fr{h}, \nonumber
        \\
        \dot{\mbf{H}}
        &=
        \frac{d}{dt}\Theta(\mbf{G})
        =
        (d\Theta_{\mbf{G}})(\dot{\mbf{G}})
        =
        \theta(\mbf{A}_\fr{g})\Theta(\mbf{G})
        =
        \bar{\mbf{A}}\mbf{H}. \label{app:eq_quotient_LieEquation}
    \end{align}
    Now let us construct the Lie equation generated by ideal $\fr{a}$. We take a smooth local section,
    \begin{equation*}
        \mbf{s}: H \to G,
        \quad
        \Theta(\mbf{s}(\mbf{H}(t))) = \mbf{H}(t).
    \end{equation*}
    As $\ker \Theta = \text{im} \: \boldsymbol i$ from the definition of the short exact sequence,
    we can characterize the ideal.
    Given an inversion map on group $\iota: G \to G$,
    \begin{equation}
    \label{app:eq_ideal_LieEquation}
        \mbf{a}(t) := \mbf{Q}^{-1}(t)\mbf{G}(t),
        \quad
        \mbf{Q}(t) := \mbf{s}(\mbf{H}(t)) \in G,
        \quad
        \mbf{Q}^{-1}(t) = \iota(\mbf{Q}(t)).
    \end{equation}
    We can confirm $\mbf{a}(t)$ is indeed an element of $A$, sitting in $\ker \Theta$:
    \begin{equation*}
        \Theta(\mbf{Q}^{-1}\mbf{G}) = \Theta(\mbf{Q}^{-1})\Theta(\mbf{G}) = \Theta(\iota(\mbf{s}(\mbf{H})))\mbf{H} = \mbf{H}^{-1}\mbf{H} = e_{H},
    \end{equation*}
    therefore any $\mbf{a}(t) \in A$.
    Differentiation gives us the Lie equation of $\fr{a}$.
    \begin{equation}
    \label{app:eq_ideal_first_differentiation}
        \dot{\mbf{a}} = \left(\frac{d}{dt} \mbf{Q}^{-1} \right)\mbf{G} + \mbf{Q}^{-1}\dot{\mbf{G}}
        = -\mbf{Q}^{-1}\dot{\mbf{Q}}\mbf{Q}^{-1}\mbf{G} + \mbf{Q}^{-1}\dot{\mbf{G}}
        = (\mbf{Q}^{-1}\mbf{A}_\fr{g}\mbf{Q}-\mbf{Q}^{-1}\dot{\mbf{Q}})\mbf{a},
    \end{equation}
    where $\dot{\mbf{Q}}(t)$ follows,
    \begin{equation*}
        \dot{\mbf{Q}} = (d\mbf{s})_{\mbf{H}}(\dot{\mbf{H}}) = (d\mbf{s})_{\mbf{H}}(\bar{\mbf{A}}\mbf{H}).
    \end{equation*}
    Again let us confirm $(\mbf{Q}^{-1}\mbf{A}_\fr{g}\mbf{Q}-\mbf{Q}^{-1}\dot{\mbf{Q}}) \in \fr{a}$
    to ensure Eq.~\ref{app:eq_ideal_first_differentiation} is indeed the Lie equation of $\fr{a}$.
    
    First, note that $\mbf{Q}^{-1}\mbf{A}_\fr{g}\mbf{Q}$ is an adjoint action (or conjugation) of Lie group on its Lie algebra
    \cite{robbin2022introduction}.
    Being matrix Lie group and algebra, it is a standard matrix calculus:
    \begin{equation*}
        \theta(\mbf{Q}^{-1}\mbf{A}_\fr{g}\mbf{Q})
        =
        \theta(\text{Ad}_{\mbf{Q}^{-1}}\mbf{A}_\fr{g})
        =
        \text{Ad}_{\Theta(\mbf{Q}^{-1})}\bigl(\theta(\mbf{A}_\fr{g})\bigr)
        =
        \Theta(\mbf{Q}^{-1})\theta(\mbf{A}_\fr{g})\Theta(\mbf{Q})
        =
        \mbf{H}^{-1}\bar{\mbf{A}}\mbf{H}.
    \end{equation*}
    The second term $\mbf{Q}^{-1}\dot{\mbf{Q}}$ is simply derived from Eq.~\ref{app:eq_quotient_LieEquation}:    
    \begin{equation*}
        \theta(\mbf{Q}^{-1}\dot{\mbf{Q}}) = \mbf{H}^{-1}\dot{\mbf{H}} = \mbf{H}^{-1}\bar{\mbf{A}}\mbf{H},
    \end{equation*}
    then terms cancel out,
    \begin{equation*}
        \theta(\mbf{Q}^{-1}\mbf{A}_\fr{g}\mbf{Q}-\mbf{Q}^{-1}\dot{\mbf{Q}}) = 0.
    \end{equation*}
    Recall from Eq.~\ref{app:eq_proofs_Lie_alg_extension_seq} that $\ker \theta = \text{im} \; i$.
    Therefore, Eq.~\ref{app:eq_ideal_first_differentiation} is a Lie equation generated by the ideal.
    
    Together, Eq.~\ref{app:eq_stack_Lie_eq} locally admits 2-layer decomposition:
    \begin{align*}
        \mbf{G}(t) = \mbf{s}(\mbf{H}(t))\mbf{a}(t),
        \quad
        \dot{\mbf{H}}(t) &= \bar{\mbf{A}}(x(t))\mbf{H}(t),
        \quad
        \dot{\mbf{a}}(t) = \mbf{\bar{Q}}(x(t), \mbf{H}(t))\mbf{a}(t), \\
        \mbf{\bar{Q}}(x(t), \mbf{H}(t)) &= \mbf{Q}^{-1}(t)\mbf{A}_\fr{g}(x(t))\mbf{Q}(t)-\mbf{Q}^{-1}(t)\dot{\mbf{Q}}(t).
    \end{align*}
    Therefore, the lifted Lie equation of $\mbf{S}_\fr{g}$ decomposes into 2 Lie equations where each represents ideal $\fr{a}$ and quotient $\fr{h}$.
    With Eq.~\ref{eq:flow_to_state_submersion}, decomposed Lie equations
    can locally simulate $\mbf{S}_\fr{g}$.
    To project Lie equation back to SSM,
    we vectorize with the Kronecker product $\otimes$.
    For example, in Eq.~\ref{app:eq_stack_Lie_eq},
    consider $\fr{g} \subset \fr{gl}(n,\bb{R})$.
    Let $\mbf{z} \in \bb{R}^{n^2}$ be a flattened $\mbf{G} \in \bb{R}^{n \times n}$:
    \begin{equation}
    \label{app:eq_Lie_to_SSM_vectorization}
        \dot{\mbf{z}}(t)
        =
        \bigl(
        I_n \otimes \mbf{A}_\fr{g}(x(t))
        \bigr)
        \mbf{z}(t),
        \quad
        \mbf{z}(0) = \text{vec}(I_n).
    \end{equation}
    Therefore, $\mbf{S}_\fr{g}$ can be decomposed and simulated by 2-layer deep SSM.
\end{proof}
\subsection{Proof of Corollary~\ref{col:Kstacks}}
\label{app:sub_proofKstacks}
\begin{proof}
    Recall that a solvable Lie algebra $\fr{g}$ with derived length $k$ satisfies following:
    \begin{equation}
    \label{app:eq_proofs_solvable_algebra_seq}
        \fr{g} = \fr{g}^{(0)} \supseteq \cdots \supseteq \fr{g}^{(k-1)} \supseteq \fr{g}^{(k)} = \{0\},
        \quad
        0 \to \fr{g}^{(k-1)} \xrightarrow{i} \fr{g}  \xrightarrow{\theta} \fr{g} / \fr{g}^{(k-1)} \to 0.
    \end{equation}
    
    We invoke Thm.~\ref{thm:depth_extension} to decompose $\mbf{S}_\fr{g}$ by replacing $\fr{a}$ with $\fr{g}^{(k-1)}$ and $\fr{h}$ with $\fr{g}/\fr{g}^{(k-1)}$ in Eq.~\ref{app:eq_proofs_Lie_alg_extension_seq}.
    As the base (quotient) layer $\fr{g}/\fr{g}^{(k-1)}$ is again solvable with derived length $k-1$,
    we can again faithfully represent $\fr{g} / \fr{g}^{(k-1)}$ as a matrix Lie algebra and iteratively apply the same decomposition near identity.

    Therefore, a solvable $\mbf{S}_\fr{g}$ can be decomposed and simulated by an abelian deep SSM.
\end{proof}
\subsection{Proof of Corollary~\ref{col:nilpotentization}}
\label{app:sub_proof_nilpotentization}
The flow of non-solvable system $\mbf{S}_\fr{g}$ demands infinite nontrivial terms in Magnus expansion.
Nilpotentization, or truncation, of the Magnus expansion is
an approximation technique that matches up to $c$-th order Magnus term \cite{iserles2008magnus,blanes2009magnus}.

As noted in Sec.~\ref{app:CDE}, the $c$-th order Magnus term involves $(c-1)$ Lie bracket operations. In lower central series notion, the term involves $\fr{g}^{c-1}$.
The idea is to quotient out the higher-order brackets and construct a class $c$ nilpotent Lie algebra.
This allows us to keep Magnus series intact up to $\Omega_c$, while any $\Omega_{>c}$ vanishes.

More explicitly, for any Lie algebra $\fr{h}$, the derived series and lower central series satisfy:
\begin{equation*}
    \fr{h}^{(k)} \subseteq \fr{h}^{2^k-1}.
\end{equation*}
Therefore, if $\fr{h}$ is nilpotent of class $2^k-1$,
its derived length is upper bounded by $k$ \cite{burde2012derived}.
In other words, if $\fr{h}$ is nilpotent of class $c$,
its derived length $k$ is upper bounded by:
\begin{equation*}
    k \le \lceil \log_2 (c+1) \rceil = \lfloor \log_2 c \rfloor + 1,
\end{equation*}
as $c$ is integer.
Combined with Cor.~\ref{col:Kstacks},
abelian $k$-layer SSM suffices to express the truncated Magnus expansion.

Following, the leading term of Magnus series truncation error is $\|\Omega_{2^k}\| \lesssim \mcal{O}(\epsilon^{2^k})$ \cite{blanes2009magnus}.
Therefore, adding layers can exponentially increase the truncation order
and decrease the magnitude of leading error term.
\subsection{Proof of Proposition~\ref{prop:logdepth}}
\label{app:sub_proof_logdepth}
Here we begin with a key concept for the proof.
\begin{definition}[Lyndon words \cite{avdieiev2024affine}]
\label{app:def_LyndonWords}
    Let $\mcal{A}$ be totally ordered.
    For any words, we introduce the \emph{lexicographical} order on $\mcal{A}^*$:
    \begin{equation}
    a_1a_2\ldots a_i < b_1b_2\ldots b_j \quad \text{if} \begin{cases}
            a_1 = b_1, a_2 = b_2, \ldots, a_n = b_n, a_{n+1} < b_{n+1}\text{ for some } n \ge 0 \\
            \text{or} \\
            a_1 = b_1, \ldots, a_i = b_i \text{ and } i < j.
        \end{cases}
    \end{equation}
    Then, a word $w = a_1a_2\ldots a_i$ is \emph{Lyndon} if it is smaller than all of its cyclic permutations:
    \begin{equation*}
        a_1\ldots a_{n-1}a_n \ldots a_i < a_n\ldots a_i a_1\ldots a_{n-1}
        \quad
        \forall n \in \{2, \ldots, i\}.
    \end{equation*}
    and we call $a_1\ldots a_{n-1}$ and $a_n \ldots a_i$ as \emph{prefix} and \emph{suffix} of the word, respectively.
\end{definition}
\begin{proof}
With the definition of word problems in Def.~\ref{def:wp_main_text}, 
the Chen-Fox-Lyndon theorem states that any words have a unique \emph{canonical} factorization in Lyndon words \cite{chen1958free,melanccon1989lyndon,avdieiev2024affine}.
Let us denote a set of Lyndon words on $\mcal{A}$ as $L$, and its subset up to length $T$ as $L_T$.
For any word $w$,
\begin{equation*}
    w = l_1l_2\ldots l_k,
    \quad
    l_1 \ge l_2 \cdots \ge l_k,
    \quad
    l_i \in L.
\end{equation*}
Notice that the canonical factorization of a Lyndon word is trivially itself.
Instead, Lyndon word has a \emph{costandard} factorization. For any Lyndon word $l$,
\begin{equation*}
    l = l_1l_2,
    \quad
    l_1, l_2 \in L,
\end{equation*}
where $l_2$ is the longest proper Lyndon suffix of $l$.

Clearly, the canonical factorization provides the key insight in our proof.
Intuitively, when word length is bounded at $T$,
it suffices to "memorize" Lyndon words and their composition up to length $T$ to solve the problem.

It is known that $L$ provides a basis of \emph{free} Lie algebra on $\mcal{A}$ \cite{avdieiev2024affine, walker2024log}.
Let $\fr{L}$ be a free Lie algebra generated by a finite set $\{e_a\}_{a \in \mcal{A}}$,
where each element is labeled by an alphabet.
The bracket rule is given:
\begin{equation}
    \text{b}[a] = e_a \in \fr{L} \text{ for } a \in \mcal{A},
    \quad
    \text{b}[l] = [\text{b}[l_1],\text{b}[l_2]],
\end{equation}
with $l=l_1l_2$ be the costandard factorization.
For a simple example, consider a Lyndon word $a_1a_2a_3$ with costandard factorization $(a_1, a_2a_3)$. Then, 
\begin{equation*}
    \text{b}[a_1a_2a_3] = [\text{b}[a_1], \text{b}[a_2a_3]] = [e_{a_1}, [\text{b}[a_2], \text{b}[a_3]]] = [e_{a_1}, [e_{a_2}, e_{a_3}]] \in \fr{L}.
\end{equation*}
Observe that Lyndon word length $3$ corresponds to the second-order Lie bracket.
Analogously, Lyndon word length $T$ corresponds to $(T-1)$ Lie bracket operations.

Therefore, truncating the Lyndon words at $L_T$ implies that
we retain the lower central series of $\fr{L}$ up to $(T-1)$-th order, or $\fr{L}^{T-1}$.
The first discarded lower central series is $\fr{L}^T$, which involves $T$ Lie bracket operations and thus Lyndon word longer than $T$.
Effectively, we take a quotient of $\fr{L}$ with $\fr{L}^T$.
As $\fr{L}^T$ is an ideal in $\fr{L}$, the quotient is also a Lie algebra.
Following, $\fr{g} = \fr{L} / \fr{L}^T$ is a finite-dimensional nilpotent Lie algebra with class at most $T$.

Now we invoke Cor.~\ref{col:Kstacks} just as we did in App.~\ref{app:sub_proof_nilpotentization}.
For a class-$T$ nilpotent $\mbf{S}_\fr{g}$, its derived length is at most $\lfloor \log_2 T \rfloor + 1$.
Therefore, we can construct abelian $\bigl(\lfloor \log_2T \rfloor + 1\bigr)$-layer SSM plus output layer to simulate flow \cite{burde2012derived}.
Each alphabet symbol is then a piecewise-constant external input signal to $\mbf{S}_\fr{g}$.
\end{proof}
\subsection{Proof of Corollary~\ref{col:logdepth_state}}
\label{app:sub_proof_logdepth_state}
The construction is natural from Prop.~\ref{prop:logdepth}.
Following logics and notations in App.~\ref{app:sub_proof_logdepth},
we highlight that the number of Lyndon words determines the dimension of $\fr{g}$.
Let the number of alphabet symbols $|\mcal{A}| = n$,
and denote $\fr{g}(T,n) = \fr{L} / \fr{L}^T$.
Then, \emph{Witt's formula} provides following \cite{magnus1966combinatorial, bouallegue2010primitive}:
\begin{equation*}
    \dim \fr{g}(T,n) = \sum_{m=1}^T \frac{1}{m} \sum_{d \mid m} n^d \ \mu\bigl(\frac{m}{d} \bigr),
\end{equation*}
where $\mu$ is the M\"obius function.

%% file: A3_Experiments.tex
\section{Experimental Details}
\label{app:exp}
Here we provide further experimental details.
Upon acceptance, we will release all our code used to produce our experimental results on a public GitHub repository.
We conducted all our experiments using a single GPU (A100 or H100) and no run lasted longer than three hours.

\subsection{Word problems}
\label{app:exp_wp}
\paragraph{Task formulation.}
We encode each of the word problems used in Table \ref{tab:word_problem} as a causal sequence labeling problem.
Each word problem has a finite set of group elements.
A task sequence is constructed by assigning a unique index to each group element in the corresponding word problem.
Given a sequence of indexed transformations, the task is to predict, at each position in the sequence, the class label corresponding to the group element obtained by composing the transformations in sequence.
Training and test data sets are constructed by randomly sampling such sequences.

In practice, the corresponding word problem sequence is prepended with an artificial
\emph{beginning} of sequence (BOS) token.
Conventionally, we assign the same BOS token as the target.
BOS token allows the model to \emph{burn in} an initial state before processing the actual problem sequence.

In our preliminary studies, we also tried to explicitly learn the initial states of the sequence models (for GLA and Mamba variants) as part of the trainable parameters.
As we did not find this to be helpful, all our experiments are conducted without learning the initial states.
Modes are trained to make prediction at each position in the sequence, up to the maximum length of the data.

A sequence-level prediction is only counted as correct if the model successfully predicts all the elements in the sequence correctly.

\paragraph{Model configurations.}
Models are separately trained on each word problem. Given that each word problem has a different number of group element, their base vocabulary size is different for each case (namely, 2 for parity/$C_2$, 3 for $C_3$, 8 for $D_8$ and $H_3$, 6 for $S_3$, 24 for $S_4$, and 60 for $A_5$).
In addition to these base vocabulary, additional special tokens such as the padding token, including BOS, are allocated.
Such a choice is conventional for most modern sequence model implementations, which we follow;
none of them is required in our tasks except BOS.

The numbers reported in Table \ref{tab:word_problem} are the best results from
training all hyper-parameters configuration listed in Table \ref{tab:hyperparameters}
for three random seeds each for 100 epochs.
We monitor length generalization performance, and stop early
if the model achieves 100\% sequence-level prediction accuracy.
This corresponds to the standard evaluation protocol on word problems \citep{merrill2024illusion,siems2025deltaproduct}, and serves as an operational definition of a model that is unsuccessful at performing a certain task.
For all tasks except the simplest $C_2$, we use 500k training sequences, and evaluation is conducted on 1000 test sequences. 
As a technical side note, Deltaproduct achieved 100\% generalization accuracy with only 100K training sequences for all the tasks; nevertheless, to potentially help other models, we increased it to 500K.

We use sequence length of 128 for training and 256 for testing.

\paragraph{Implementation.}
We use \texttt{flash-linear-attention} \citep{yang2024fla} to implement transformer, GLA, and DeltaProduct models.
We use public code from the respective papers for signed Mamba \citep{grazzi2024unlocking} and AUSSM \citep{karuvally2025bridging}.

Instead of signed Mamba, one could also use complex or signed diagonals for GLAs in theory.
However, GLA's efficient implementations of the chunk-wise training algorithm assume positive diagonals as also noted by \citet{grazzi2024unlocking}.
Therefore, we focus on signed Mamba only.

Note that the AUSSM implementation requires us to disable mixed precision for compatibility with the use of complex numbers.
Furthermore, the AUSSM backward kernel assumes sequence length to be a multiple of 8,
requiring some care about the length of sequences we feed in practice.

\begin{table*}[t]
\caption{
Hyper-parameter search space. The number of heads is set to 8 for transformer variants, and $d\_\text{state}$ (i.e., the ``head dimension'') is set to 16 for SSMs. Each configuration is run with 3 random seeds.
}
\label{tab:hyperparameters}
\begin{center}
\begin{tabular}{rc}
\toprule
Parameters & Values  \\ \midrule
Hidden size & \{64, 128, 256\} \\
Learning rate & \{1e-2, 1e-3, 3e-4, 1e-4\} \\
Batch size & \{256, 2048\} \\
\bottomrule
\end{tabular}
\end{center}
\end{table*}

\paragraph{Model class.}
AUSSM is complex-valued, but this does not leave our SSM model class.
$n$-dimensional complex diagonal SSMs are equivalent to $m$-dimensional real block-diagonal SSMs
\cite{orvieto2023resurrecting}.
Algebraically, \emph{complexifying} Lie algebras on real field is a standard practice.
For example, any real solvable finite-dimensional Lie algebra complexifies to complex solvable Lie algebra $\fr{g}$.
This provides additional clarity as Lie's theorem allows us to represent any $\fr{g}$ as an upper triangular matrix
over $\bb{C}$ \cite{hall2013lie}.

The transformer is included as a reference model that is parallelizable and order-symmetric with respect to the layer input.
We also leverage interpretation in \citet{choromanski2020rethinking,peng2021random} and \citet{sieber2024understanding}.
Note that the softmax self-attention is algebraically order-symmetric and thus abelian.

It is straightforward to relate GLA, (Signed) Mamba, and DeltaProduct with our SSM model class.

\subsection{$A_5$-rotated vector prediction task}
\label{app:exp_rotation}
Here we provide further details of the rotation task in Sec.~\ref{sec:exp_rotation} and Fig.~\ref{fig:rotation_mse}.
Note that a task description is as provided in the main text: the target vectors are generated by using the matrices provided in the main text.
All the 60 elements of $A_5$ task can be expressed using $\{\mbf{P}, \mbf{R} | \mbf{P}^3=\mbf{R}^5=(\mbf{PR})^2 =\mbf{I}\}$.

We also conducted experiments with AUSSM.
However, we did not report them in the main text as we obtained generally subpar performance.
We conjecture this as the practical training issue.
In particular, complex-valued models are notoriously known to have practical trainability issue (see, e.g., \citet{elelimy0BW24}).
For the sake of completeness, we report the corresponding results here in Fig.~\ref{fig:rotation_mse_aussm}.

\paragraph{Hyper-parameters.} All models are trained using a batch size of 256, learning rate of $1e^{-3}$, with a hidden size of 128.
The number of layers is varied from 1 to 8.
All the models are trained on 500k training sequences of length 128 for 50 epochs.
For deeper models, we have investigated larger hyper-parameter space following Table \ref{tab:hyperparameters}, but we did not obtain any better results.

\begin{figure*}[t]
    \begin{center}
        \includegraphics[width=0.95\columnwidth]{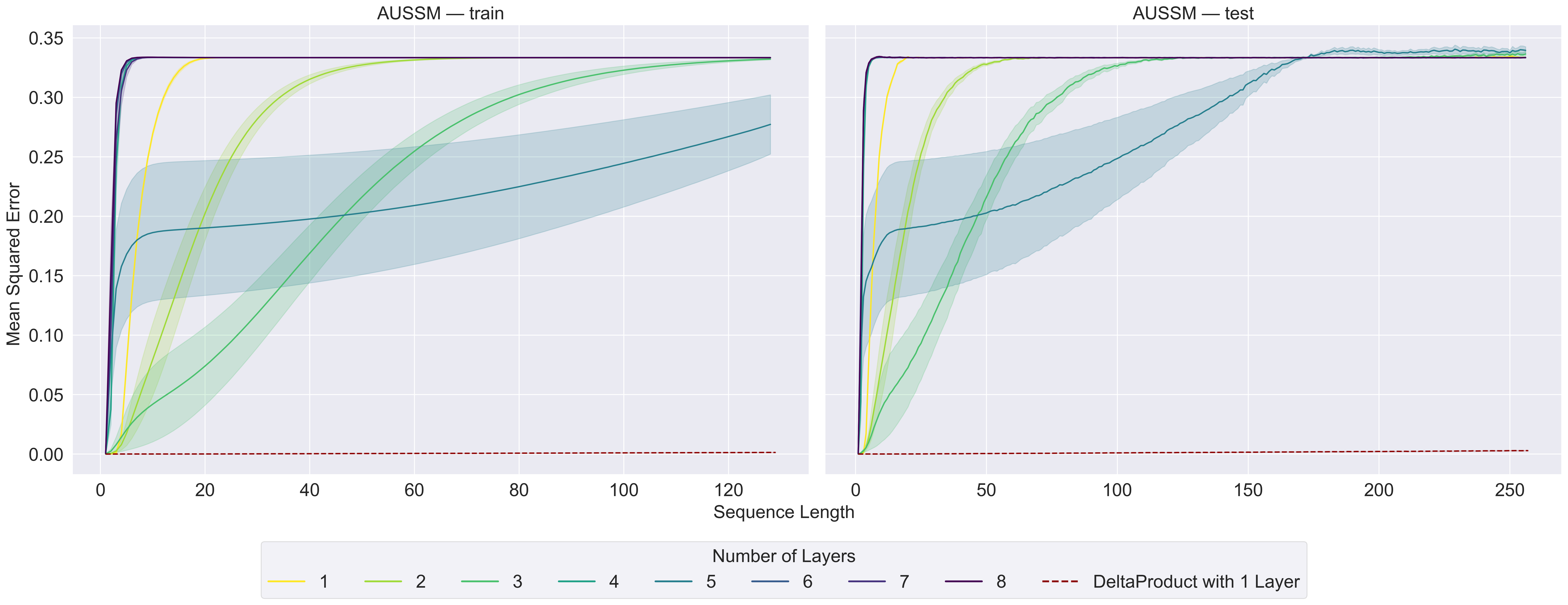}
        \caption{Mean squared loss on the rotated vector prediction task at various sequence length for AUSSM, with different numbers of layers, on train (left) and test (right) sets. Standard error is shown using 3 seeds. Performance of DeltaProduct with 4 Householder products is shown as a reference.
}
        \label{fig:rotation_mse_aussm}
    \end{center}
\end{figure*}